\definecolor{darkbrown}{rgb}{0.5, 0.25, 0.20}
\definecolor{dimeblue}{rgb}{0, 0.3, 0.7}
\theoremstyle{definition}
\theoremstyle{plain}
\newtheorem{theorem}{Theorem}
\newtheorem{corollary}{Corollary}
\theoremstyle{remark}
\newcommand{\ty}[1]{}
\newcommand{\ry}[1]{}
\title{SpecHub: Provable Acceleration to Multi-Draft Speculative Decoding}
\author{
  \textbf{Ryan Sun\textsuperscript{1,2}},
  \textbf{Tianyi Zhou\textsuperscript{3}},
  \textbf{Xun Chen\textsuperscript{2}},
  \textbf{Lichao Sun\textsuperscript{1}},
\\
  \textsuperscript{1}Lehigh University,
  \textsuperscript{2}Samsung Research America,
  \textsuperscript{3}University of Maryland, College park
\\
  \small{
    \textbf{Correspondence:} \href{mailto:email@domain}{lis221@lehigh.edu}
  }
}
\begin{document}
\maketitle
\begin{abstract}

Large Language Models (LLMs) have become essential in advancing natural language processing (NLP) tasks, but their sequential token generation limits inference speed. 
Multi-Draft Speculative Decoding (MDSD) offers a promising solution by using a smaller draft model to generate multiple token sequences, which the target LLM verifies in parallel.
However, current heuristic approaches, such as Recursive Rejection Sampling (RRS), suffer from low acceptance rates in subsequent drafts, limiting the advantages of using multiple drafts. 
Meanwhile, Optimal Transport with Membership Cost (OTM) can theoretically improve acceptance rates, but its computational cost is too high for real-time use.
We present SpecHub, a novel, efficient sampling-verification method for MDSD that improves acceptance rates with only linear computational overhead. By simplifying the OTM problem into a compact Linear Programming model, SpecHub significantly reduces computational complexity. It further accelerates sampling by leveraging a sparse joint distribution, focusing computation on high-probability token sequences.
In extensive experiments, Spechub consistently generates 0.05-0.27 and 0.02-0.16 more tokens per step than RRS and RRS without replacement. \looseness-1 
We attach our code at \url{https://github.com/MasterGodzilla/Speculative_decoding_OT}. 
\end{abstract}

\section{Introduction}







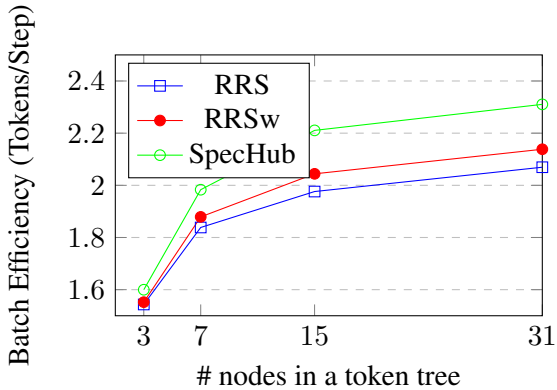
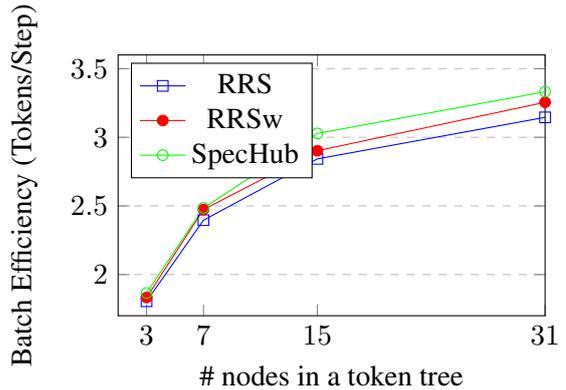
\begin{figure*}[t]
    \centering
    \begin{subfigure}[b]{0.45\textwidth}
        \centering
        \begin{tikzpicture}
            \begin{axis}[
                xlabel=\# nodes in a token tree,
                ylabel=Batch Efficiency (Tokens/Step),
                xmin=1, xmax=31, 
                ymin=1.5, ymax=2.5, 
                xtick={3,7,15,31},
                legend pos=north west,
                ymajorgrids=true,
                grid style=dashed,
                width=\textwidth, 
                height=0.7\textwidth, 
            ]
            
            \addplot[
                color=blue,
                mark=square,
            ]
                coordinates {
                    (3, 1.5432) (7, 1.8384) (15, 1.9762) (31, 2.0694) 
                };
                \addlegendentry{RRS}
               \addplot[
                color=red,
                mark=*,
            ]
                coordinates {
                    (3, 1.5521) (7, 1.8790) (15, 2.0441) (31, 2.1383) 
                };
                \addlegendentry{RRSw}

            \addplot[
                color=green,
                mark=o,
            ]
                coordinates {
                    (3, 1.5997) (7, 1.9832) (15, 2.2106) (31, 2.3104) 
                };
                \addlegendentry{SpecHub}
                
            \end{axis}
        \end{tikzpicture}
        \caption{Llama2-7B with JF68m draft model on CNN dataset.}
        \label{subfig:llama2_efficiency}
    \end{subfigure}
    \hfill
    \begin{subfigure}[b]{0.45\textwidth}
        \centering
        \begin{tikzpicture}
            \begin{axis}[
                xlabel=\# nodes in a token tree,
                ylabel=Batch Efficiency (Tokens/Step),
                xmin=1, xmax=31, 
                ymin=1.7, ymax=3.6, 
                xtick={3,7,15,31},
                legend pos=north west,
                ymajorgrids=true,
                grid style=dashed,
                width=\textwidth, 
                height=0.7\textwidth, 
            ]
            
            \addplot[
                color=blue,
                mark=square,
            ]
                coordinates {
                    (3, 1.8054) (7, 2.3961) (15, 2.8425) (31, 3.1451)
                };
                \addlegendentry{RRS}
            
            \addplot[
                color=red,
                mark=*,
            ]
                coordinates {
                    (3, 1.8327) (7, 2.4737) (15, 2.9019) (31, 3.2548)
                };
                \addlegendentry{RRSw}

            \addplot[
                color=green,
                mark=o,
            ]
                coordinates {
                    (3, 1.8655) (7, 2.4850) (15, 3.0281) (31, 3.3318)
                };
                \addlegendentry{SpecHub}
                
            \end{axis}
        \end{tikzpicture}
        \caption{Vicuna-7B with EAGLE draft model on MT-Bench.}
        \label{subfig:vicuna_efficiency}
    \end{subfigure}
    \caption{Batch efficiency of SpecHub, RRS, and RRSw with different numbers of nodes in a binary token tree with temperature $T=1.0$.}
    \label{fig:batch_efficiency_comparison}
\end{figure*}

With the growing adoption of Large Language Models (LLMs) in diverse applications, there is a significant demand for faster inference and lower latency in both local computing and online API services. However, the sequential generation process of autoregressive language models complicates parallel computation. This challenge is exacerbated by the memory limitations of current hardware architectures, where RAM and cache communication latencies often constrain performance, resulting in underutilized computing capacity.

Speculative decoding \citep{leviathan2023fast,chen2023accelerating} accelerates LLM inference while preserving the model's output distribution. By generating a sequence of draft tokens in advance using a smaller model, it leverages GPUs to verify tokens simultaneously through rejection sampling. Recent advancements \citep{chen2024sequoia,jeon2024recursive,sun2024spectr,miao2023specinfer} have further enhanced this approach by introducing tree-structured multi-drafts, where each path represents a draft. These tokens are verified in parallel during a single forward pass of the LLM. Using a token tree increases the number of accepted tokens by providing multiple options for each token position, thus increasing the overall acceptance rate of the algorithm and generation efficiency. 



Despite having various tree constructions, draft model designs, and hardware optimizations, existing Multi-Draft Speculative Decoding (MDSD) methods depend on recursive rejection sampling (RRS) for acceptance, which is far from optimal. \ty{Here is a place to introduce the undersampling and oversampling issues.} While RRS greedily accepts the token from the first draft, it does not consider the subsequent drafts and misses the opportunity to dynamically adjust the current token's acceptance strategy to improve the acceptance rates of the later drafts.
RRS prioritizes the first draft's tokens but fails to dynamically adjust acceptance strategies for subsequent drafts. As a result, later iterations use a residual distribution modified by previous acceptances, leading to misalignment with the original draft distribution and lower acceptance rates~\citep{chen2023cascade}. 
\citet{sun2024spectr} shows that the acceptance rule could be optimized through an Optimal Transport problem with Membership Cost (OTM), which maximizes acceptance rates by better aligning draft tokens with the accepted token. However, OTM requires tremendous computation overhead and is not practically feasible. 
\ry{RRSw is not important enough to make into the Intro, thus deleted}

In this paper, we address the trade-off between computational efficiency and sampling optimality 
in Multi-Draft Speculative Decoding (MDSD). We first reduce the OTM formulation to a much smaller linear programming (LP) by focusing only on the transport plan of scenarios where at least one draft gets accepted. 
We then investigate the overlooked design choice of draft sampling. While all previous methods used either sampling with or without replacement, which makes finding the optimal solution notoriously hard, we show that an optimal acceptance rule can be trivially obtained if we instead choose only certain drafts of tokens. As a result, we can develop practical algorithms that balance acceptance rate with computation overhead. 

Building on the new LP formulation and insights, we introduce SpecHub, a faster sampling-verification paradigm with only linear computational overhead. Instead of constructing a dense distribution of $k$-draft and the accepted token, SpecHub strategically selects drafts containing the highest probability token sampled from the draft model.
The top draft token serves as a transport hub for an oversampled token~\footnote{Draft model probability exceeds that of the target model.} to transfer its excessive probability mass to an undersampled token. 
This sparse structure simplifies and accelerates the underlying linear programming. 
SpecHub performs particularly well on LLMs whose output distributions are concentrated on the top token, resulting in higher acceptance rates than RRS. It even provably outperforms OTM under certain situations. The algorithm is widely applicable and can seamlessly integrate into various MDSD algorithms, enhancing their efficiency and overall decoding speed. 

We empirically test SpecHub by implementing it to various MDSD frameworks \cite{li2024eagle,chen2024sequoia,miao2023specinfer}. We observe a $1-5\%$ increase in the second draft acceptance rate, which yields a consistent $0.02-0.16$ improvement in batch efficiency over current methods. More impressively, SpecHub uses a tree with only half the nodes of other methods to reach the same level of batch efficiency. 
In our ablation study, SpecHub brings consistent acceleration to LLM decoding under different temperatures. 
Our toy experiments further show that SpecHub sometimes outperforms OTM in high-entropy regions.

\section{Background and Related Work}
\label{sec:background}

Here, we review the sampling-verification schema of speculative decoding. We discuss the theory behind rejection sampling and explain why naively extending it to Multi-Draft Speculative Decoding (MDSD) becomes inefficient. 

\paragraph{Speculative Sampling}
Language model decoding is intrinsically serial. Let $\mathcal{V}$ denote the vocabulary, a discrete set of tokens that the language model may generate. Let $x^{1:t} = (x^1, \ldots, x^{t})\in \mathcal{V}^{\otimes t} $ denote a sequence of tokens. Then, the target language model produces a conditional probability $p(\cdot|x^{1:t})$, from which we sample the next token $x^{t+1} \sim p(\cdot|x^{1:t})$. However, this process is slow for its serial execution. 

Speculative decoding \cite{chen2023accelerating, leviathan2023fast} addresses the issue by parallelizing the decoding process with a draft and verify phase. It first uses a smaller draft model $q(\cdot|x^{1:t})$ to generate a draft $(x^{t+1}, \ldots, x^{t+d})$ sequentially. The depth of the draft, $d$, is usually around $5$. This draft allows us to compute the target distributions $p(x^{t+\tau}|x^{1:t+\tau-1})$ in parallel for $\tau \leq d$. Then, we iteratively accept each draft token using rejection sampling with acceptance probability $\min\left(1,\frac{p(x^{t+\tau}|x^{1:t+\tau-1})}{q(x^{t+\tau}|x^{1:t+\tau-1})}\right)$. In this single draft setting, speculative decoding equates to sampling directly from the target distribution. After rejection, we sample from the residual distribution $\text{norm} (\max(0,p(\cdot|x^{1:t+\tau-1}) - q(\cdot|x^{1:t+\tau-1}))) $. \ty{It is better to explain the motivation by providing a visualization of the residual.}

With only a single draft, the expected number of tokens generated at each iteration is upper-bounded. Assume the average acceptance rate for each token is $\alpha$, the maximum acceleration is $1/(1-\alpha)$ \citep{chen2024sequoia}. Multi-Draft Speculative Decoding solves this issue \citep{miao2023specinfer,sun2024spectr}. Instead of verifying one sequence per time, MDSD generates a tree of tokens and calculates their target probability in parallel. Thus, when the first draft gets rejected, the other drafts can be picked up, and their offspring get verified in the current step. By doing so, we trade more parallel inference for more tokens generated in each step. 

\begin{figure}[t]
    \centering
    \includegraphics[width=0.45\textwidth]{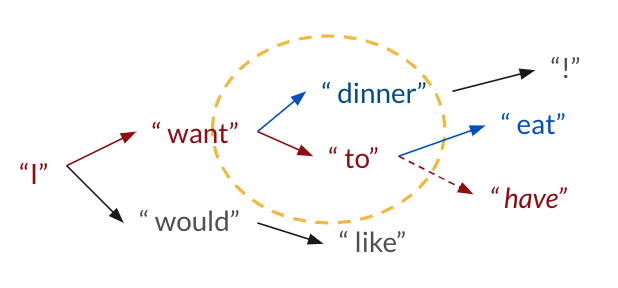}
    \caption{An example of a token tree of depth $d=4$ for MDSD. The tree is generated sequentially with the draft model and evaluated concurrently with the target model. Each path in the tree corresponds to a potential sequence of tokens, with \textcolor{darkbrown}{accepted tokens} and \textcolor{dimeblue}{rejected tokens} highlighted. The black arrows indicate tokens that were not visited. The dashed line represents a sample drawn from the residual distribution after all drafts are rejected. Our paper focuses on the evaluation of one step, how we choose to sample the $k=2$ tokens " dinner" and " to" from the draft distribution $q(\cdot|\text{"I want"})$ and decide which of them to get accepted based on the target probabilities $p(\text{" dinner"}|\text{"I want"})$ and $p(\text{" to"}|\text{"I want"})$.}
    \label{fig:token_tree}
\end{figure}

In the rest of the paper, we ignore any temporal relationship and only focus on a single temporal step in the decoding process. In particular, given $q(\cdot|x^{1:t-1})$ and $p(\cdot|x^{1:t-1})$, we discuss the sampling and verification algorithm for generating the offspring drafts and accepting one. 
We simplify the notation and use \( p = p(\cdot | x^{1:t-1}) \in \Delta^{|\mathcal{V}|-1}  \) \ty{$[0,1]^{|\mathcal{V}|}$ is not accurate, it should be a probability simplex.} to denote the target model's probability distribution and \( q = q(\cdot | x^{1:t-1}) \in \Delta^{|\mathcal{V}|-1} \) to denote the draft model's distribution. Here, $\Delta^{|\mathcal{V}|-1} = \left\{p \in \mathbb{R}^{|\mathcal{V}|} \ \bigg| \ \sum_{x \in \mathcal{V}} p(x) = 1, \ p(x) \geq 0 \ \forall x \in \mathcal{V} \right\}$ is the probability simplex of dimension $|\mathcal{V}|$. We also notate the probability simplex of joint distributions over a group of $k$ drafts at a single temporal step as $x_{1:k} = (x_1, \ldots, x_k)$ as: 
\begin{align*}
    \Delta^{|\mathcal{V}|^k - 1} = &\{ P \in \mathbb{R}^{|\mathcal{V}|^k} \ \bigg| \ \sum_{X \in \mathcal{V}^{\otimes k}}  P(x_{1:k}) = 1, \\ 
    &P(x_{1:k}) \geq 0 \ \forall x_{1:k} \in \mathcal{V}^{\otimes k} \}
\end{align*}

\paragraph{Rejection Sampling in Speculative Decoding} 

\ty{This part repeats most of the content in the Speculative Sampling part. Why?} We here provide a geometric intuition behind rejection sampling. 
Given a target distribution $p$ and a sample token from the draft distribution $x\sim q$, we seek to accept $x$ as much as possible while ensuring the outputted token from the process follows $p$. 
We can visualize the process as sampling a point under the probability mass function (PMF) of $p$. The draft sample lies under the PMF of $q$. If the token $x$ is undersampled ($q(x) < p(x)$), we always accept it.        
If it is oversampled ($q(x) > p(x)$), the data point may or may not fall under $p$,
in which case we accept it with probability $p(x)/q(x)$, the height ratio between the two curves at this token. \ty{as shown in Figure ?} 
Such methods fully utilize the overlap between the two distributions and give the highest theoretical acceptance rate. See Figure~\ref{fig:rejection sampling}. \ty{how to get this conclusion?}

\begin{figure}[t]
    \centering
    \includegraphics[width=0.45\textwidth]{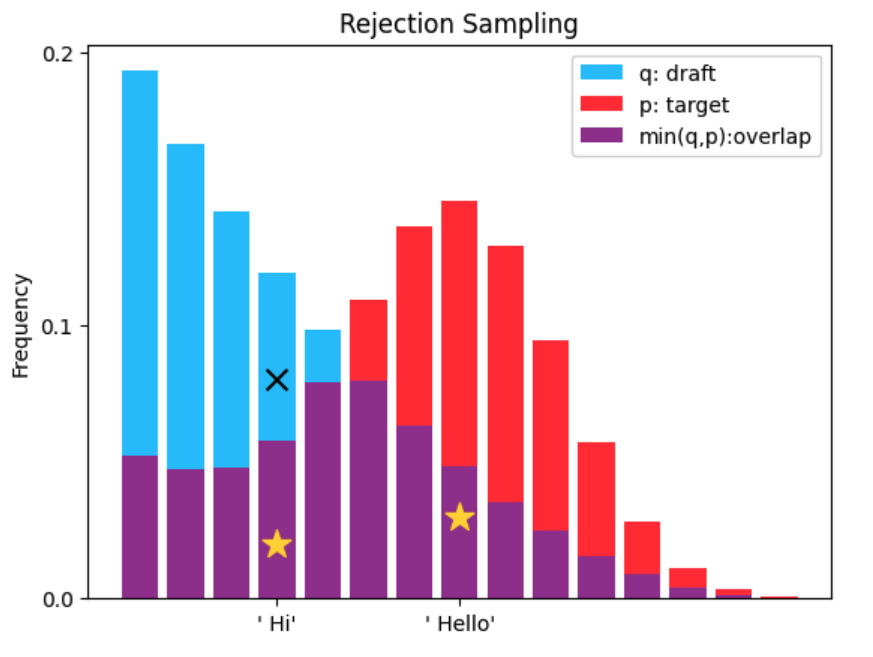}
    \caption{An illustration of rejection sampling. Sampling from the \textcolor[HTML]{33C1FF}{draft distribution} gives a point under the blue distribution $q$. If the sample is also under the \textcolor[HTML]{992C99}{overlap with the target distributions $p$}, we accept it. If not, we reject the token and sample from the residual distribution, the remaining unexplored area $\max(p-q, 0)$ normalized. The misalignment of the \textcolor[HTML]{FF2733}{residual distribution} and \textcolor[HTML]{33C1FF}{draft distribution} makes Recursive Rejection Sampling (RRS) inefficient in proceeding runs. }
    \label{fig:rejection sampling}
\end{figure}

The residual distribution $\text{norm} (\max(0,p - q))$ captures the remaining probability mass that was not covered by $q$. Sampling from this residual distribution ensures that any rejections are accounted for by exploring the regions where $p$ exceeds $q$. This approach aligns the accepted samples closely with $p$, effectively achieving maximal coupling and ensuring the samples represent the target distribution $p$.

\paragraph{Recursive Rejection Sampling}

To facilitate MDSD, previous methods use Recursive Rejection Sampling, which naively applies rejection sampling on the residual distributions. 
First, Recursive Rejection Sampling (RRS) samples $k$ candidates independently from the draft distribution. Then, it accepts each candidate with rejection sampling. If the token is rejected, the target distribution is updated to the residual distribution $\text{norm} (\max(p-q, 0))$.
While the acceptance of the first candidate is high, subsequent candidates suffer from the potential mismatch between the residual distributions and draft distribution \( q \). Essentially, our residual distribution deducts draft distribution, so we expect it to diverge from the draft distribution $q$ we used to generate our samples, leading to small overlapping areas and inefficiencies. 

\paragraph{Recursive Rejection Sampling without Replacement}

\begin{algorithm}[h]
\small
\caption{Token-level RRS}
\label{alg:rrs-wo}
\begin{algorithmic}[1]
\STATE {\bf Input:} Target model distribution $p$, draft model distribution $q$, number of candidates $k$
\STATE {\bf Output:} A token $x$ selected using RRS without replacement.
\STATE Generate $k$ samples $x_1, \ldots, x_k$ independently or \textcolor{red}{without replacement} from $q$
\FOR{$i= 1 \rightarrow k$}
\STATE sample $r_i \sim \mathrm{Uniform}(0,1)$
\IF {$r_i < \frac{p(x_i)}{q(x_i)}$}
\STATE {\bf Return $x_i$}
\ELSE
\STATE{$p \leftarrow \text{norm}(\max(p-q, 0))$}
\IF{\textcolor{red}{without replacement}}
\STATE{\textcolor{red}{$q(x_i) \leftarrow 0$}}
\STATE{\textcolor{red}{$q \leftarrow \text{norm}(q)$}}
\ENDIF
\ENDIF
\ENDFOR
\STATE{\bf Return $x \sim p$}
\end{algorithmic}
\end{algorithm}

In low-temperature settings, RRS may repeatedly sample the same token and fail to diversify the tree. Furthermore, a rejected token will continuously get rejected since the corresponding entry of the residual probability is $0$. Following this intuition, several works\cite{chen2024sequoia,jeon2024recursive, li2024eagle, yang2024multi} proposed Recursive Rejection Sampling without Replacement (RRSw). Instead of independently sampling, it samples tokens without replacement. It also modifies the draft distribution after each rejection to maintain a correct marginal distribution. The differences are highlighted in Algorithm~\ref{alg:rrs-wo} in red. While the method speeds up the decoding process by avoiding repetition, it still falls short of a theoretically optimal verification method as the misalignment between residual distribution and the draft distribution remains.

\section{Mathematical Formulation of Multi-Draft Speculative Decoding}
\label{sec:math}

In this section, we lay out the mathematical formulation of the sampling and verification paradigm of MDSD. We start by reviewing the Optimal Transport with Membership Cost framework by \citet{sun2024spectr} in Section~\ref{subsec:OTM}. We show that it can simplified and propose an equivalent LP formulation that greatly reduces computation complexity in Section~\ref{subsec:LP}. Lastly, we point out that changing the design of sampling can make the LP feasible for real-world calculation in Section~\ref{subsec:sampling} while preserving the acceleration. We also discuss some considerations for a real-world algorithm. 

\subsection{Optimal Transport with Membership Cost}
\label{subsec:OTM}

We show how we can find the optimal sampling and verification algorithm of MDSD that maximizes the acceptance rate as solving an Optimal Transport problem with Membership Cost~\citep{sun2024spectr}. 
Let the target distribution be $p$ and the joint draft distribution $Q = q^{\otimes k}\in \Delta^{|\mathcal{V}|^k -1}$ be the Cartesian product of the draft distributions that gives the probability of sampling any particular series of draft tokens $x_{1:k}$, so $Q(x_{1:k}) = \prod_{i=1}^k q(x_i)$. Let $y$ denote the accepted token.
We define the coupling between $p$ and $Q$ or equivalently a transport plan from $Q$ to $p$ be a joint distribution $\pi(x_{1:k}, y)\in\Delta^{|\mathcal{V}|^{k+1} -1} $ whose marginal distributions satisfies $\sum_{y\in\mathcal{V}} \pi(x_{1:k}, y) = Q(x_{1:k})$ and $\sum_{x_{1:k}\in\mathcal{V}^k}\pi(x_{1:k}, y) = p(y)$. We use the terms coupling and transport plan interchangeably. 
The Membership Cost is \(
c(x_{1:k}, y)  = \prod_{i=1}^k \mathds{1}_{y \neq x_i}
\), an indicator function of whether the accepted token $y$ equals any of the draft tokens $x_i$. The transport cost then calculates the expected rejection rate:
\[
C(\pi) = \mathbb{E}_{x_{1:k}, y \sim \pi} \left[\prod_{i=1}^k \mathds{1}_{y \neq x_i}\right].
\]
It is well-known that Optimal Transport on discrete space can be solved as a linear programming problem as
\begin{equation}
\label{eq:OT_LP}
\min_{\pi \in \Pi(p,q)} \sum_{x_{1:k}} \sum_{y \in \mathcal{V}} \pi(x_{1:k}, y) \prod_{i=1}^k \mathds{1}_{y \neq x_i}
\end{equation}
where $\Pi(p,q)$ is the set of all valid couplings between $p$ and $q^{\otimes k}$. However, such a program contains $O(|\mathcal{V}|^{k+1})$ variables, so even the fastest linear programming algorithm struggles to calculate in real-time. 

\begin{figure}[t]
    \centering
    \begin{subfigure}{0.45\textwidth}
        \centering
        \includegraphics[width=\textwidth]{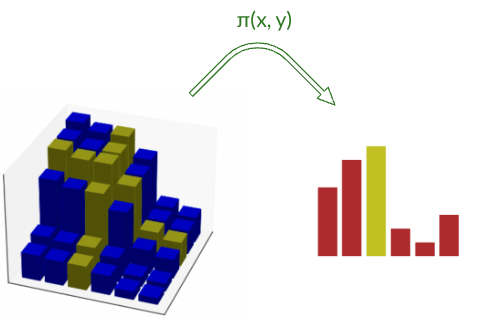}
        \caption{OTM}
        \label{subfig:OTM_pi}
    \end{subfigure}
    \hfill 
    \begin{subfigure}{0.45\textwidth}
        \centering
        \includegraphics[width=\textwidth]{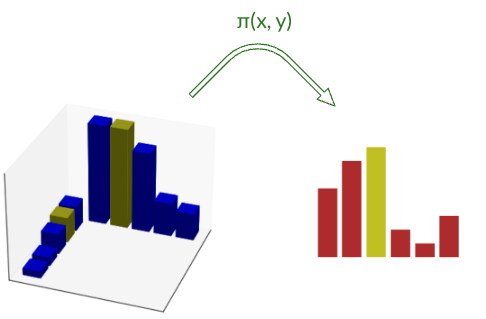}
        \caption{SpecHub}
        \label{subfig:SpecHub_pi}
    \end{subfigure}

    \caption{An illustration comparing the Optimal Transport with Membership Cost (OTM) framework and SpecHub. In both (a) and (b), the left side shows a two-draft joint sampling distribution, while the right side depicts the target distribution. The yellow bars highlight the token of interest in the target. In (a), OTM requires solving for the transport map $\pi$ of a dense sampling distribution like $Q = q^{\otimes 2}$, which is computationally expensive. In (b), SpecHub simplifies this process by sparsifying the joint distribution, significantly reducing the complexity of solving for $\pi$.}

\end{figure}

\subsection{A Simplified Linear Programming Formulation}
\label{subsec:LP}
While the Optimal Transport formulation provides a theoretical framework for understanding Multi-Draft Speculative Decoding, its computational complexity renders it impractical for real-time applications. To address this, we introduce a simplified Linear Programming (LP) formulation that significantly reduces the number of variables while preserving the essence of the problem.

The key insight behind this simplification is that the acceptance rate is primarily determined by how the sampled draft tokens are handled. Once a token is rejected, the subsequent actions, which involve recalculating the residual distribution and resampling, can be performed efficiently without explicitly considering the full coupling. 

Instead of representing the entire coupling $\pi$, which has $O(|\mathcal{V}|^{k+1})$ variables, our simplified LP formulation focuses on $\pi(x_{1:k}, y=x_i)$, $i = 1,\ldots,k$, a smaller subset of transport plan which denotes the probability of sampling the series of drafts  and accepting the $i$-th token $x_i$. This effectively reduces the number of variables to $O(|\mathcal{V}|^k)$, making the problem more tractable.
The remaining probabilities in the coupling, which correspond to cases where the target token does not match any of the draft tokens, are implicitly handled by the residual distribution.


The simplified LP formulation is then:
\begin{align*}&\text{minimize}_{\pi}  \  1- \sum_{x_{1:k}\in\mathcal{V}^k}\sum_{i=1}^k  \pi(x_{1:k}, x_i)\\
    &\text{subject to}  \\
    &\pi(x_{1:k}, x_i) \geq 0 
    ,\quad\quad\quad\quad\quad\quad\ \forall x_{1:k}\in\mathcal{V}^k,i\\
    & \sum_{i=1}^k \pi(x_{1:k}, x_i) \leq Q(x_{1:k}) ,\quad\quad\ \ \forall  x_{1:k}\in\mathcal{V}^k \\
    & \sum_{i=1}^k \sum_{ x_{1:k}\in\mathcal{V}^k,x_i=y} \pi(x_{1:k}, y) \leq p(y),\quad\forall y \in \mathcal{V}
\end{align*}


Given a solution to this simplified LP formulation, we can reconstruct the complete transport plan $\pi(x_{1:k}, y)$. For any series of drafts $x_{1:k}$ and target token $y$, if $y$ does not equal one of the draft tokens in $x_{1:k}$, the entry is calculated as:
\begin{align*}
&\pi(x_{1:k}, y)  \quad\text{\# where }y\neq x_i\ \forall i=1,\ldots,k\\
=\ &\frac{p(y) - \sum_{i=1}^k \sum_{ x_{1:k}\in\mathcal{V}^k,x_i=y} \pi(x_{1:k}, y)}{\sum_{y\in \mathcal{V}}p(y) - \sum_{i=1}^k \sum_{ x_{1:k}\in\mathcal{V}^k,x_i=y} \pi(x_{1:k}, y)}\\
&\cdot (Q(x_{1:k}) - \sum_{i=1}^k \pi(x_{1:k}, x_i))
\end{align*}

The first term is the unallocated target probability mass or the residual probability of $y$ normalized
. The second term is the remaining probability mass of the series of drafts $x_{1:k}$ after allocating probabilities to cases where the target token matches a draft token.
This reconstruction process ensures that the validity of the coupling.
This simplified LP formulation, while ignoring the explicit representation of the full coupling, retains the essential information needed to optimize the acceptance rate. It provides a practical and computationally feasible approach to solving the MDSD problem.

\begin{theorem}[Equivalence of LP to OTM]
\label{thm:LP_equivalence}
For a given joint draft distribution $Q$ and target distribution $p$, the optimal solution of the simplified LP formulation achieves the same transport cost as the maximal coupling in the Optimal Transport with Membership Cost (OTM) problem, i.e., $1- \sum_{x_{1:k}\in\mathcal{V}^k}\sum_{i=1}^k  \pi(x_{1:k}, x_i) = C(\pi^*)$, where $\pi^*$ is the optimal coupling for the OTM problem as defined in Equation \ref{eq:OT_LP}.
\end{theorem}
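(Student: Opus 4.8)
The plan is to prove the two optimal values coincide by establishing the inequality in both directions, after first observing that the two objectives measure the same quantity. Writing the membership cost as $\prod_{i=1}^k \mathds{1}_{y\neq x_i} = 1 - \mathds{1}_{y\in\{x_1,\dots,x_k\}}$, the OTM cost becomes $C(\pi) = 1 - \sum_{x_{1:k}}\sum_{y\in\{x_1,\dots,x_k\}}\pi(x_{1:k},y)$, i.e.\ one minus the total probability mass placed on entries where the accepted token matches some draft. The simplified LP objective is exactly $1 - \sum_{x_{1:k}}\sum_{i=1}^k \pi(x_{1:k},x_i)$, so both problems maximize the total ``accepted mass.'' Hence it suffices to show that the accepted-mass profiles attainable by a full OTM coupling coincide, in attainable objective value, with the feasible region of the LP. Let $V_{\mathrm{OTM}}$ and $V_{\mathrm{LP}}$ denote the two optimal values.

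For $V_{\mathrm{LP}} \le V_{\mathrm{OTM}}$, I would take an optimal coupling $\pi^*$ and read off its ``diagonal'' entries, those with $y\in\{x_1,\dots,x_k\}$, to define a candidate LP solution. The only bookkeeping point is repeated draft tokens: if $y=x_i=x_j$, the single coupling value $\pi^*(x_{1:k},y)$ must be attributed to exactly one position (say the smallest index) so that $\sum_i \pi(x_{1:k},x_i)$ does not double count. With that convention, the row marginal $\sum_y \pi^*(x_{1:k},y)=Q(x_{1:k})$ together with $\pi^*\ge 0$ yields $\sum_i \pi(x_{1:k},x_i)\le Q(x_{1:k})$, and the column marginal $\sum_{x_{1:k}}\pi^*(x_{1:k},y)=p(y)$ yields the third constraint. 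The objective is unchanged by construction, so $V_{\mathrm{LP}}\le C(\pi^*)=V_{\mathrm{OTM}}$.

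For the reverse inequality $V_{\mathrm{OTM}}\le V_{\mathrm{LP}}$, I would take an optimal LP solution $\{\pi(x_{1:k},x_i)\}$ and extend it to a full coupling. The accepted masses fix, for each draft tuple, a leftover row mass $Q(x_{1:k})-\sum_i\pi(x_{1:k},x_i)\ge 0$ (second constraint) and, for each token $y$, a residual column mass $p(y)-\sum_{i,\,x_i=y}\pi(x_{1:k},y)\ge 0$ (third constraint); by the first observation these two families of leftovers share the same total $1-V_{\mathrm{LP}}$. I then fill in the off-diagonal entries ($y\notin\{x_1,\dots,x_k\}$) with any nonnegative matrix having these prescribed row and column sums and vanishing on the matching cells, for instance the normalized product reconstruction given in the text. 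Any such completion is a feasible OTM coupling whose cost equals $1-\sum_{x_{1:k}}\sum_i\pi(x_{1:k},x_i)=V_{\mathrm{LP}}$, giving $V_{\mathrm{OTM}}\le V_{\mathrm{LP}}$ and hence equality.

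The main obstacle is this completion step: one must verify that a valid coupling with the required structural zeros actually exists and reproduces both marginals exactly. Nonnegativity is immediate from the LP inequalities, but exactness of the two marginals is delicate, because the reconstruction must route the leftover draft mass only to tokens outside the current tuple while still recovering each token's residual $p(y)-\mathrm{alloc}(y)$. I would justify this by viewing it as a transportation problem on the allowed (non-matching) cells with equal margin totals and invoking the standard feasibility/max-flow condition, taking care of the degenerate case where the total accepted mass equals $1$ (empty residual, so the completion is trivially zero) and of the repeated-draft accounting from the previous step.
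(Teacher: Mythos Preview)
Your two-direction argument is correct and matches the paper's proof: it extracts the diagonal of an optimal OTM coupling to obtain an LP-feasible point with the same objective (your $V_{\mathrm{LP}}\le V_{\mathrm{OTM}}$, framed in the paper as a contradiction step), and it completes an optimal LP solution to a full coupling via the normalized product reconstruction given in the text (your $V_{\mathrm{OTM}}\le V_{\mathrm{LP}}$). On the completion obstacle you flag, you do not need max-flow: at an \emph{optimal} LP solution, any tuple $x_{1:k}$ with positive leftover row mass must have zero column residual $p(x_j)-\alpha(x_j)$ at each of its tokens (otherwise increasing $\pi(x_{1:k},x_j)$ strictly improves the objective), so restricting the product formula to $y\notin\{x_1,\dots,x_k\}$ drops no residual mass and both marginals are recovered exactly.
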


\begin{proof}
    See Appendix~\ref{appendix:LP_correctness}.
\end{proof}
\paragraph{Examining Recursive Rejection Sampling (RRS)} How does an optimal solution to the Linear Programming (LP) formulation differ from RRS? Consider the simple case of $k=2$. When a series of drafts $x_1, x_2$ is sampled according to $Q(x_{1:2})$, we must decide whether to accept $x_1$ or $x_2$ based on the target distribution $p$. If $x_1$ is significantly over-sampled, meaning $p(x_1) < q(x_1)$. RRS makes this decision independently for each draft token, while the OTM solution considers the entire series. Specifically, the OTM solution will tend to allocate less probability mass to accepting $x_1$ if $x_2$ is undersampled ($p(x_2) > q(x_2)$) and more probability mass if $x_2$ is also oversampled. This flexible adaptation ensures a more targeted distribution in subsequent drafts, leading to more efficient sampling and verification. 

\begin{figure}[t]
    \centering
    \begin{subfigure}{0.4\textwidth}
        \centering
        \includegraphics[width=\textwidth]{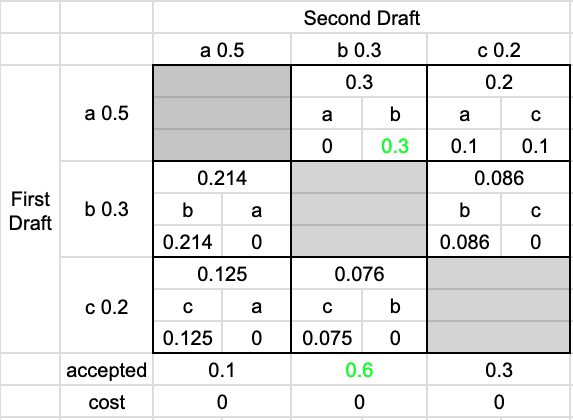}
        \caption{Optimal solution to LP}
        \label{subfig:LP_recursive_sampling}
    \end{subfigure}
    \hfill 
    \begin{subfigure}{0.4\textwidth}
        \centering
        \includegraphics[width=\textwidth]{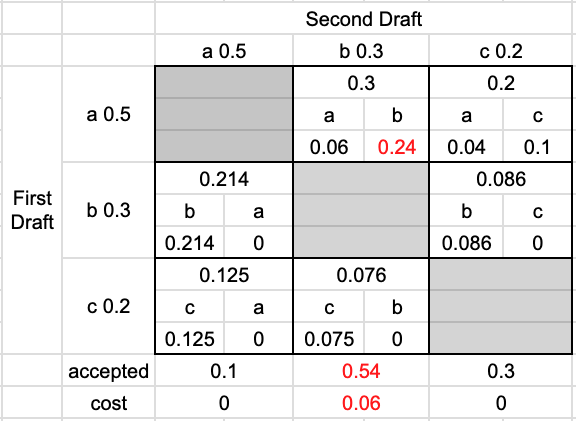}
        \caption{RRSw solution to LP}
        \label{subfig:LP_RRS_sampling_wo}
    \end{subfigure}

    \caption{A comparison of an optimal solution to an RRSw solution under the LP formulation. Here, the draft distribution $q = [0.5,0.3,0.2]$ and the target distribution $p = [0.1, 0.6, 0.3]$. Each number on the top of the cell is $Q(x_1, x_2)$, and the numbers at the bottom of the cell show $\pi(x_1,x_2, x_1)$ and $\pi(x_1,x_2, x_2)$, i.e. how much of those draft probabilities are transferred to the target probability. RRSw has a transport cost of $0.06$ for not generating enough token 'b'.}

\end{figure}

\paragraph{Unbalanced Tree and Asymmetric Verification} When considering a single temporal step in the sampling and verification process, the order in which a pair of samples $x_{1:k}$ is selected appears inconsequential, as the branches are executed concurrently. However, as suggested by Sequoia~\cite{chen2024sequoia}, the most efficient tree structure is often unbalanced. If the acceptance rate of the early draft is higher than that of the second, designing a tree that extends deeper along the first few branches while keeping other branches shallower can enhance efficiency.
Optimal algorithms may decrease the first few drafts' acceptance rate slightly to achieve a higher overall acceptance rate, which we need to carefully balance to leveraging the advantages of unbalanced tree structures and significantly improving decoding speed and performance.

\subsection{Design of Sampling}
\label{subsec:sampling}

While the simplified LP formulation significantly reduces the computational burden compared to the OTM, it remains computationally expensive for large vocabularies. Directly solving the LP problem is impractical, and previous research has predominantly focused on developing heuristics to approximate the optimal solution. These heuristics, such as Recursive Rejection Sampling (RRS) or SpecTr\cite{sun2024spectr}, operate under a fixed joint draft distribution, typically assuming independent sampling with ($Q = q^{\otimes k}$) or without replacement ($Q(x_{1:k}) = \frac{\prod_{i=1}^k q(x_i)}{\prod_{i=1}^{k-1} (1-\sum_{j=1}^{i}q(x_j))}$).

However, a crucial and often overlooked aspect is the ability to \textbf{modify the joint draft distribution $Q$}, which unlocks a new dimension for optimization that has not been fully explored. 
The key to designing a practical and efficient sampling strategy is recognizing that $Q$ does not need to be a dense distribution over all possible drafts. Instead, we can strategically construct a sparse $Q$ that simplifies the LP formulation while capturing the essential features of the target distribution. This sparsity reduces the number of variables and constraints in the LP, making it significantly easier to solve or approximate. 

Ideally, the design of $Q$ should satisfy two key criteria: 1) \textbf{Sparsity}; $Q$ should be sparse, concentrating on a small subset of highly probable draft series to reduce computational complexity; and 2) \textbf{Efficiency}; $Q$ should effectively capture the essential features of target distribution $p$, ensuring that the sampled drafts are likely to contain the target token.
By carefully designing $Q$, we can balance computational efficiency and acceptance rate, paving the way for practical and high-performance MDSD algorithms.

\section{SpecHub}

Building on the aforementioned insights, we introduce SpecHub, a faster sampling-and-verifying paradigm with only linear computational overhead. It effectively captures the transport features of OTM solutions to enhance the acceptance rate and can be applied to various multi-draft speculative sampling algorithms. 
Since using more than two drafts offers little gains in efficiency, SpecHub uses two drafts (i.e., $k=2$) to reduce complexity. We thoroughly discuss expanding the algorithm to more drafts in Appendix~\ref{appendix:more_drafts}.

First, we identify the token with the highest draft probability, denoted as \(a\), and sample it alongside other tokens. We only populate the first column and the first row in the joint draft distribution $Q$. In particular, we define the joint draft distribution \(Q(x_1, x_2)\) as follows:
\[
Q(x_1, x_2) = 
\begin{cases} 
q(x_1) & \text{if } x_2 = a, \\
\frac{q(a)q(x_2)}{1-q(a)} & \text{if } x_1 = a, \\
0 & \text{otherwise}.
\end{cases}
\]
This specific design of $Q$ makes the solution to the simplified LP formulation straightforward. $\forall x \in \mathcal{V}, x\neq a$, we have 
\begin{align*}
    \pi(x, a, x) &= \min(p(x), q(x)) \\
    \pi(a, x, x) &= \min(p(x) - \pi(x,a,x), Q(a,x)) 
\end{align*}
After transporting draft probabilities to target probabilities of non-top tokens, the remaining draft accepts the top token $a$ evenly out of $p(a)$
The remaining entries in $\pi$ can be reconstructed as described in the previous section. This solution effectively allocates as much probability mass as possible to the non-hub draft tokens while ensuring that the hub token $a$ is never undersampled. This strategy maximizes the utilization of the draft distribution and leads to a higher acceptance rate compared to traditional methods like RRS. The pseudocode implementation is in Appendix~\ref{appendix:pseudocode}.  

\begin{figure}[t]
    \centering
    \includegraphics[width=0.4\textwidth]{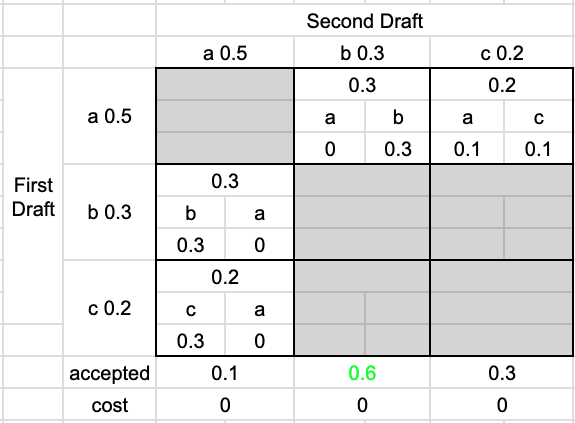}
    \caption{ SpecHub under the LP formulation. Here, the draft distribution $q = [0.5,0.3,0.2]$ and the target distribution $p = [0.1, 0.6, 0.3]$. SpecHub focuses on the top token "a", sampling pairs $(x, a)$ and $(a, x)$ with probabilities $q(x)$ and $\frac{q(a)q(x)}{1-q(a)}$, respectively. This method ensures efficient allocation of acceptance probabilities.}

    \label{fig:LP_SpecHub}
\end{figure}

\paragraph{Analysis} SpecHub offers several theoretical advantages. First, since all drafts contain the top token $a$, it is accepted with a probability of $p(a)$ and is never undersampled (see Corollary~\ref{lemma:top_token}). Additionally, 
let $\alpha$ be the first draft acceptance rate of rejection sampling, defined as $\alpha = \sum_x \max(p(x), q(x))$. SpecHub achieves a higher acceptance rate than Recursive Rejection Sampling (RRS) if the top token $a$ satisfies the condition $\frac{q(a)}{1-q(a)} > 1 - \alpha$. We even guarantee acceleration over OTM sampled with replacement $Q=q^{\otimes 2}$ if $\frac{1}{1-q(a)} > 2$ or $q(a) > \frac{1}{2}$. Detailed proofs of these results are in Appendix~\ref{appendix:acceptance_rate}.

While SpecHub might theoretically decrease the first draft acceptance rate for the top token $a$ in rare cases, our empirical results, detailed in Appendix~\ref{appendix:first_draft}, show that this effect is negligible. 

\section{Experiments}

In this section, we empirically show that SpecHub improves batch efficiency in speculative multi-draft decoding. We first show that SpecHub gives a significantly higher acceptance rate for its better coupling properties in the second draft acceptance rate. We then illustrate how the improvement transfers to higher batch efficiency. 

\subsection{Experiment Setup}

Our experimental setup is based on the Llama and Vicuna models. To mimic the setup of \citet{chen2024sequoia}, we utilize the JackFram/Llama-68m and JackFram/Llama-160m (JF68m, JF160m)~\cite{miao2023specinfer} models as our draft models and the Llama2-7B~\cite{touvron2023llama} models as our target models.  We evaluate our results on the OpenWebText~\cite{Gokaslan2019OpenWeb} and CNN DailyMail~\cite{see-etal-2017-get} datasets. For each run, we use 200 examples to measure the acceptance rate vector and sample another 200 examples for evaluation. The prompt length and generation length are both set to 128 tokens. We evaluate our system on a single RTX A5000 GPU. 

We also implement our algorithm on EAGLE \citep{li2024eagle}. In short, EAGLE trains an autoregressive decoding head that takes both the embedding in the last layer of the target model and the draft tokens to predict a draft. We test its performance on Vicuna-7b \cite{zheng2024judging}, a fine-tuned LLaMA chatbot using ChatGPT \citep{openai2024gpt4} to generate responses. We use the MT-Bench dataset and temperatures $T=0.6,\ 1.0$ with binary trees and binary Sequoia trees.

\subsection{Main Experiments}

\begin{table}
\centering

\begin{tabular}{cccc}
\toprule
T & RRS & RRSw & \textbf{SpecHub} \\
\midrule
0.3 & 0.0426 & 0.1114 & \textbf{0.1184} \\
0.6 & 0.0740  & 0.1089 & \textbf{0.1379} \\
1.0 & 0.1021 & 0.1140  & \textbf{0.1660}  \\
\bottomrule
\end{tabular}

\caption{Second Draft Acceptance Rate for JF68m Model} %
\label{tab:68m-model}
\vspace{2ex}

\centering

\begin{tabular}{cccc}
\toprule
T & RRS & RRSw & \textbf{SpecHub} \\
\midrule
0.3 & 0.0399 & 0.1129 & \textbf{0.1221} \\
0.6 & 0.0730  & 0.1212 & \textbf{0.1351} \\
1.0 & 0.0910  & 0.1176 & \textbf{0.1660}  \\
\bottomrule
\end{tabular}

\caption{Second Draft Acceptance Rate for the JF160m Model}
\label{tab:160m-model}
\end{table}

\paragraph{Second Draft Acceptance Rate}

We evaluate SpecHub at different temperatures $T = 0.3, 0.6, 1.0$ using JF68m and JF160m as draft models. We observe that SpecHub consistently outperforms RRS and RRSw. In particular, at higher temperatures, SpecHub achieves up to $5\%$ improvements in the second draft acceptance rate from $0.114-0.118$ to $0.166$. At a lower temperature, the improvement over RRSw becomes smaller since the whole process assimilates greedy decoding. In fact, SpecHub is equivalent to RRS without replacement at zero temperature since both algorithms become top-2 greedy decoding. Results are shown in Table~\ref{tab:68m-model} and \ref{tab:160m-model}. 

\paragraph{Batch Efficiency}

\begin{figure}[t]
    \centering
    \includegraphics[width=\linewidth]{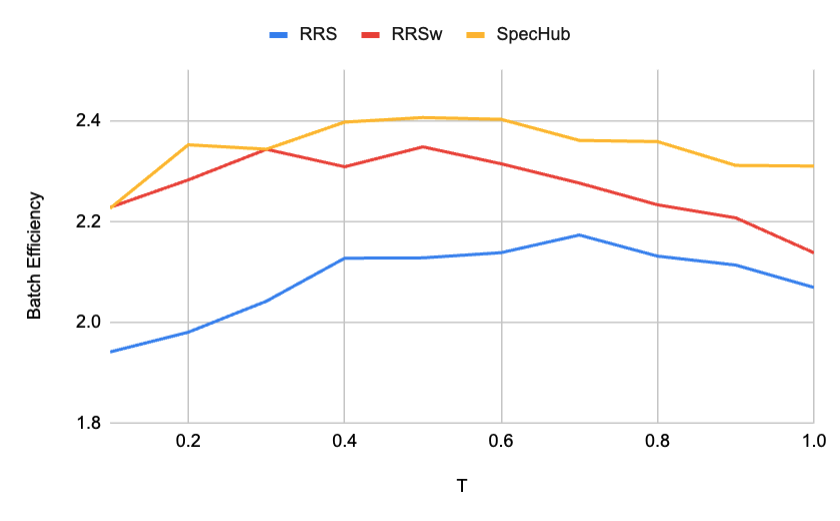}
    \caption{Batch efficiency at different temperatures.}
    \label{fig:batch_efficiency_temperature}
\end{figure}

We examine how the increased second-draft acceptance rate translates to better batch efficiency in different tree configurations. We empirically test SpecHub and RRS without replacement on binary trees of depth $d$ with $2^d-1$ nodes and report the batch efficiency in \ref{fig:batch_efficiency_comparison}. We see that with JF68M as the draft model, SpecHub consistently outperforms RRS and RSSw by $0.02-0.10$ and $0.04-0.20$ in batch efficiency at temperatures $T=0.6,\ 1.0$. Meanwhile, using the EAGLE decoding head as the draft model, SpecHub generates up to $3.53$ and $3.33$ tokens per iteration in the binary tree setting at $T=0.6,\ 1.0$, an additional $0.08$ tokens than RRS without replacement. We also tested the batch efficiency on optimal binary Sequoia trees\cite{chen2024sequoia}. The full experiment results are in Appendix~\ref{appendix:experiments}. 

\subsection{Ablations}
\label{subsec:ablations}
\paragraph{Robustness to Different Temperatures}

We analyze the performance of SpecHub across different temperatures (T) and compare it with Recursive Rejection Sampling (RRS) and RRS without replacement (RRSw). We use a binary token tree of depth $d=5$ with JF68m as the draft model for Llama-2-7b. As shown in Figure~\ref{fig:batch_efficiency_temperature}, SpecHub consistently outperforms both RRS and RRSw regarding batch efficiency across all temperature settings. At lower temperatures $(T < 0.4)$, SpecHub assimilates RRSw in performance. At medium $(0.4 \leq T \leq 0.6)$ and higher temperatures $(T > 0.6)$, SpecHub maintains superior performance, demonstrating its robustness and adaptability across varying entropy levels. 

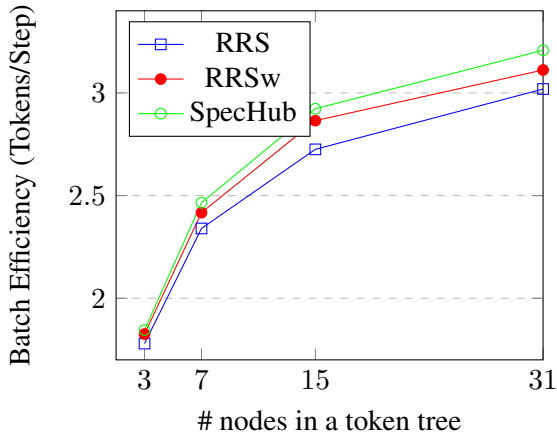
\begin{figure}[t]
    \centering
    \begin{tikzpicture}
        \begin{axis}[
            xlabel=\# nodes in a token tree,
            ylabel=Batch Efficiency (Tokens/Step),
            xmin=1, xmax=31,
            ymin=1.7, ymax=3.4, 
            xtick={3,7,15,31},
            legend pos=north west,
            ymajorgrids=true,
            grid style=dashed,
            width=0.45\textwidth, 
        ]
        \addplot[
            color=blue,
            mark=square,
        ]
            coordinates {
                (3, 1.7792) (7, 2.3399) (15, 2.7252) (31, 3.0190)
            };
            \addlegendentry{RRS}
        \addplot[
            color=red,
            mark=*,
        ]
            coordinates {
                (3, 1.8257) (7, 2.4169) (15, 2.8649) (31, 3.1116)
            };
            \addlegendentry{RRSw}
        \addplot[
            color=green,
            mark=o,
        ]
            coordinates {
                (3, 1.8454) (7, 2.4650) (15, 2.9230) (31, 3.2079)
            };
            \addlegendentry{SpecHub}
        \end{axis}
    \end{tikzpicture}
    \caption{Decoding efficiency of Vicuna-33B on the MT-Bench dataset at temperature $T=1.0$.}
    \label{fig:larger_model}
\end{figure}

\paragraph{Scaling to Larger Models}
We evaluate SpecHub on larger models, specifically Llama 2-13B-Chat and Vicuna-1.3-33B, to assess its scalability. For Llama, we use JackFram/Llama-160m as the draft model and test on the CNN DailyMail dataset. For Vicuna, we evaluate using the MT-Bench dataset with an EAGLE decoding head. Both experiments are conducted at $T=1.0$ with prompt and generation lengths set to 128 tokens. Results in Figure~\ref{fig:larger_model} show that SpecHub consistently outperforms RRS and RRSw, generating up to 0.29 more tokens per step on Llama and 0.19 tokens per step on Vicuna. SpecHub extends to larger models well, maintaining its efficiency gains as model size increases.

\section{Related Work}

\paragraph{Speculative Decoding}

Speculative decoding aims to execute multiple decoding steps in parallel. Early work \cite{stern2018blockwise} predicts future tokens to accelerate greedy decoding. 
Speculative Sampling \cite{chen2023accelerating,leviathan2023fast} extends to non-greedy decoding and uses rejection sampling to recover target distribution optimally. 
Recent works focus on reducing the running time of the draft model and increasing the acceptance rate. OSD \cite{liu2023online} and DistillSpec \cite{zhou2023distillspec} train draft models on text generated by the target model.  REST \cite{he2023rest} constructs drafts through retrieval. Lookahead Decoding \cite{fu2024break} breaks the sequential dependency with Jacobi Iterations. Self-Speculative Decoding \cite{zhang2023draft,elhoushi2024layer} avoids additional models and generates draft tokens by skipping intermediate layers. Several works, such as MEDUSA \cite{cai2024medusa} and EAGLE \cite{li2024eagle}, reuse the feature embedding of LLMs' last attention layer to predict multiple future tokens in a non-causal or autoregressive manner. 

\paragraph{Multi-Draft Speculative Decoding}

Recent research explores using tree attention to generate multiple drafts for speculative decoding \cite{miao2023specinfer, spector2023accelerating,li2024eagle}. \citet{sun2024spectr} formulate the acceptance of multiple drafts as a maximal coupling problem between the drafts and the target distributions and propose SpecTr with $1-\frac{1}{e}$ optimality guarantee. CS Drafting \cite{chen2023cascade} swaps in a lower-quality model to generate drafts for less relevant branches. MEDUSA \cite{cai2024medusa} establishes candidates according to the Cartesian product of the multi-head predictions.
Independently, \citet{jeon2024recursive} and \citet{yang2024multi} notice that a rejected token has zero probability in the residual distribution and use sampling-without-replacement in the draft generation. \citet{hu2024accelerated} accelerates MDSD Tree Monte Carlo methods, which treat language model generation as a tree-space problem. Sequoia \cite{chen2024sequoia} designed a dynamic programming algorithm to search for the optimal tree topology.

\section{Conclusion}

We presented SpecHub, a versatile and provably faster sampling-verification paradigm for Multi-Draft Speculative Decoding. Using an optimal transport map between a sparse draft and target distributions, SpecHub increases the acceptance rate of the second draft by $1-5 \%$, which leads to higher batch efficiency of LLM inference by up to 0.27 tokens per iteration. In addition to practical speedups, SpecHub also provides insight into the underlying mathematical structure in MDSD. We hope to promotes future research in this area. 

\section*{Acknowledgement} This work is done during an Internship at Samsung Research America. We thank Zhengmian Hu, Yixin Liu, Lichang Chen, and Qi Pang for their helpful discussion. 

\section*{Limitations}

Our algorithm, SpecHub, is currently designed to support only two drafts due to the computational complexities associated with using more drafts. This limitation may affect users who rely heavily on large-scale parallel computations, particularly when the number of nodes in the token tree exceeds $32$. However, such extensive parallelism is rarely utilized in practical applications, and most users will not encounter this limitation.

\section*{Ethical Statement}
This work focuses on accelerating LLM inferencing. There are no potential risks or negative effects that the authors are aware of. Additionally, we ensured that all datasets and benchmarks used in the article comply with their intended purposes and standards.

\bibliography{custom}

\appendix

\section{Correctness of the LP formulations}
\label{appendix:LP_correctness}

We prove Theorem~\ref{thm:LP_equivalence} to show that the simplified LP formulation is equivalent to the Optimal Transport with Membership Cost (OTM) problem.

\begin{proof}
We first show that we can construct a valid coupling from a valid solution to the simplified LP formulation. Given a solution represented by $\pi(x_{1:k}, x_i)$, we can derive a complete coupling $\pi(x_{1:k}, y)$, which represents the joint probability distribution of the $k$ draft tokens $x_{1:k}$ and the target token $y$.

The construction process involves allocating probabilities based on the LP solution. For each possible combination of draft tokens and target token $(x_{1:k},y)$, if $y$ matches any of the draft tokens, meaning $y = x_i$ for some $i$, then the corresponding entry in the transport plan is given by the solution to the LP:
\[\pi(x_{1:k},y) = \pi(x_{1:k}, x_i)\] 


Let $\alpha(y) =\sum_{i=1}^k \sum_{ x_{1:k}\in\mathcal{V}^k,x_i=y} \pi(x_{1:k}, y)$ be the accepting probability of token $y$. If the target token $y$ is different from all draft tokens, the probability is calculated as the product of two terms:
    \begin{align*}
    &\pi(x_{1:k},y) \\
    &=\ \frac{p(y) - \alpha(y)}{\sum_{y\in \mathcal{V}}(p(y) - \alpha(y))}\\
    &\quad\cdot (Q(x_{1:k}) - \sum_{i=1}^k \pi(x_{1:k}, x_i))
    \end{align*}
The first term is the unallocated target probability mass or the residual probability of $y$ normalized. The second term is the remaining probability mass of the series of drafts $x_{1:k}$ after allocating probabilities to cases where the target token matches a draft token.

We now verify that the constructed $\pi$ is indeed a valid coupling. 
First, we need to show that the marginal distribution on the target token $y$ is indeed $p(y)$:
\begin{align*}
    &\sum_{x_{1:k}} \pi(x_{1:k}, y) \\
    =&\sum_{i=1}^k \sum_{x_{1:k},x_i=y} \pi(x_{1:k}, y)\\
    &+ (p(y) - \sum_{i=1}^k \sum_{ x_{1:k},x_i=y} \pi(x_{1:k}, y)) \\
    &= p(y). 
\end{align*}
Then, we verify that the marginal distribution on the series of drafts is the joint draft distribution:
\begin{align*}
    &\sum_{y} \pi(x_{1:k},y) \\
    = &\sum_{i=1}^k \pi(x_{1:k}, x_i) \\
    + &\sum_{y\neq x_i \forall i} (\frac{p(y) - \alpha(y)}{\sum_{y\in \mathcal{V}}(p(y) - \alpha(y))} \\
    &\cdot (Q(x_{1:k}) - \sum_{i=1}^k \pi(x_{1:k}, x_i)))\\
    = &Q(x_{1:k})
\end{align*}

Now, we show that an optimal solution to the simplified LP formulation is also optimal for the OTM problem. 

We prove this by contradiction. Assume there exists a coupling $\pi'$ that achieves a lower transport cost than the optimal solution to the simplified LP formulation. We can construct a solution $\pi''(x_{1:k}, x_i)$ to the LP from $\pi'$ by setting $\pi''(x_{1:k}, x_i) = \pi'(x_{1:k}, x_i)$. This $\pi''$ will have the same objective value as the transport cost of $\pi'$, contradicting the optimality of the LP solution. Therefore, an optimal solution to the simplified LP formulation is also an optimal solution to the OTM problem.
\end{proof}

\section{Properties of SpecHub}

\subsection{Pseudocode Implementation of SpecHub}
\label{appendix:pseudocode}

The transport plan of top token $a$ is: 

\begin{align*}
    \pi(x, a, x) &= \min(p(x), q(x)) \\
    \pi(a, x, x) &= \min(p(x) - \pi(x,a,x), Q(a,x)) \\
    \pi(a, x, a) &= \min(p(a), \sum_{x\in \mathcal{V}} (Q(a,x) - \pi(a,x,x)) \\
    &\quad \cdot \frac{Q(a,x) - \pi(a,x,x)}{\sum_{x\in \mathcal{V}} (Q(a,x) - \pi(a,x,x))}\\
    \pi(x, a, a) &= \min(p(a) - \sum_x \pi(a, x, a),\\
    &\quad\quad\quad \sum_{x\in\mathcal{V}} q(x) - \pi(x, a, x))\\
    &\cdot \frac{ q(x) - \pi(x, a, x)}{\sum_{x\in\mathcal{V}} q(x) - \pi(x, a, x)}
\end{align*}

Here, we provide the pseudocode for using SpecHub in real life. We follow a sequential procedure and avoid explicitly writing out the underlying coupling $\pi$. 

\begin{algorithm}[h]
 \caption{GetResidual}\label{alg:residual}
\begin{algorithmic}[1]
\STATE \textbf{Inputs:} target distribution $p$, draft distribution $q$, highest probability token $a$
\FORALL{$x$ in $\mathcal{V}, x \neq a$}
\STATE $p'(x) = \max \left(p(x)-q(x), 0\right)$
\STATE $q'(x) = \max \left(q(x)-p(x), 0\right)$
\ENDFOR
\STATE $p'(a) = p(a)$
\STATE $q'(a) = 0$
\RETURN $p'$, $q'$
\end{algorithmic}
\end{algorithm}

\begin{algorithm}[h]
 \caption{Sampling and Verification with SpecHub}\label{alg:SpecHub}
\begin{algorithmic}
\STATE \textbf{Inputs:} target distribution $p$, draft distribution $q$, vocabulary $\mathcal{V}$

\STATE Let $a = \arg\max_x q(x)$ be the token with the highest draft probability. 

\FORALL{$i \in \mathcal{V}$, $x \neq a$}
    \STATE $Q(x,a) = q(x)$, $Q(a,x) = \frac{q(a) q(x)}{1-q(a)}$
\ENDFOR
\STATE Sample draft tokens $x^{(1)}, x^{(2)} \sim Q$

\IF{$x^{(2)} = a$}
    \STATE \textbf{Return} $x^{(1)}$ with probability $\min \left(\frac{p(x^{(1)})}{Q(x^{(1)},a)}, 1 \right)$
\ENDIF
\STATE $p', Q'(*, a) = $\texttt{GetResidual}$(p, Q(*,a), a)$

\IF{$x^{(1)} = a$}
    \STATE \textbf{Return} $x^{(2)}$ with probability $\min \left(\frac{p'(x^{(2)})}{Q(a,x^{(2)})}, 1 \right)$
\ENDIF
\STATE $p'', Q'(a, *) = $\texttt{GetResidual}$(p', Q(a,*), a)$

\IF{$x^{(1)} = a$}
    \STATE \textbf{Return} $a$ with probability $\min \left(\frac{p(a)}{\sum_x Q'(a,x)}, 1 \right)$
    \STATE $p'(a) = \max(p(a) - \sum_x Q'(a,x), 0)$
\ENDIF

\IF{$x^{(2)} = a$}
    \STATE \textbf{Return} $a$ with probability $\min \left(\frac{p'(a)}{\sum_x Q'(x,a)}, 1 \right)$
    \STATE $p''(a) = \max(p'(a) - \sum_x Q'(x,a), 0)$
\ENDIF

\STATE \textbf{Return} a token sampled from the residual distribution $\text{norm}(p'')$
\end{algorithmic}
\end{algorithm}

\subsection{Correctness}

Here, we proof that SpecHub does not sacrifice the quality of generation. 

\begin{theorem}\label{thm:SpecHub_correctness}
Given a target distribution \( p \) and a draft distribution \( q \), SpecHub generates tokens such that for any token \( x \in \mathcal{V} \), the probability of generating \( x \) under SpecHub, denoted as \( \mathbb{P}(X=x) \), is equal to \( p(x) \). 
\end{theorem}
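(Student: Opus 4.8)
The plan is to verify that the marginal probability of producing each token $x\in\mathcal{V}$ under the sequential Algorithm~\ref{alg:SpecHub} equals $p(x)$, by tracking how much probability mass each token accumulates across the mutually exclusive return branches. The central observation is that SpecHub is simply executing the explicit coupling $\pi$ displayed in Appendix~\ref{appendix:pseudocode}, so it suffices to show that this $\pi$ is a valid coupling whose $y$-marginal is $p$. Concretely, I would first confirm that the four closed-form entries $\pi(x,a,x)$, $\pi(a,x,x)$, $\pi(a,x,a)$, $\pi(x,a,a)$ (for $x\neq a$), together with the residual term, respect both the row constraint $\sum_i \pi(x_{1:2},x_i)\le Q(x_{1:2})$ and the column constraint $\sum_{x_{1:2}:x_i=y}\pi(x_{1:2},y)\le p(y)$, so that the reconstruction formula from Section~\ref{subsec:LP} produces a genuine element of $\Delta^{|\mathcal{V}|^{3}-1}$. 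Once that holds, Theorem~\ref{thm:LP_equivalence}'s reconstruction already guarantees the target marginal is $p$, and the correctness claim follows.

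The key steps, in order, are as follows. First I would split the analysis by the accepted token being a non-top token $x\neq a$ versus the top token $a$. For a fixed $x\neq a$, the only pairs that can return $x$ are $(x,a)$ and $(a,x)$, which are accepted in the first two \texttt{IF} blocks of Algorithm~\ref{alg:SpecHub} with acceptance masses $\pi(x,a,x)=\min(p(x),q(x))$ and $\pi(a,x,x)=\min(p(x)-\pi(x,a,x),Q(a,x))$ plus the residual contribution from \texttt{norm}(p''); I would sum these and verify they telescope to exactly $p(x)$. Second, for $x=a$, I would collect the contributions $\pi(a,x,a)$ and $\pi(x,a,a)$ summed over all $x\neq a$, note that by Corollary~\ref{lemma:top_token} the hub token is never undersampled so the residual mass $p''(a)$ is handled without overflow, and check the total again equals $p(a)$. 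Third, I would confirm exhaustiveness and mutual exclusivity of the return statements so no probability is double-counted or lost, which is what makes the per-token sums legitimate.

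The main obstacle will be the bookkeeping around the top token $a$ and the residual normalization. Because every sampled pair contains $a$ in one coordinate, the acceptance of $a$ is spread across two conditional branches with two successive residual updates ($p\to p'\to p''$), and one must show that the $\min$ truncations in $\pi(a,x,a)$ and $\pi(x,a,a)$ never clip away mass that is actually needed, i.e. that the constraint $\sum_x Q(a,x) - \pi(a,x,x)\ge$ the amount routed to $a$, and symmetrically for the $(x,a)$ direction. I would discharge this by invoking the feasibility established in the LP-validity step: since $\pi$ is constructed to be a feasible point of the simplified LP, the column constraint on $y=a$ guarantees $\sum \pi(\cdot,\cdot,a)\le p(a)$ with equality absorbed by the residual, so the truncations are never binding in a lossy way. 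The remaining pieces—that the residual distribution $\text{norm}(p'')$ is well-defined (nonnegative, nonzero mass) and that its sampling step restores precisely the leftover $\max(p-q,0)$ mass on the non-hub tokens—are routine given the \texttt{GetResidual} definition, so I would state them without grinding through the arithmetic.
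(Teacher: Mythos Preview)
Your overall plan---split into the non-hub case $x\neq a$ and the hub case $x=a$, track the acceptance mass contributed by each return branch, and sum---is exactly the paper's route, and the $x\neq a$ computation you sketch matches the paper's. The gap is in the hub case.

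You propose to handle $x=a$ by citing Corollary~\ref{lemma:top_token}. This is circular: in the paper that corollary is stated \emph{after} the proof of Theorem~\ref{thm:SpecHub_correctness} and is explicitly introduced as ``a corollary of the last part of the proof.'' Its content---that the hub token is accepted with probability exactly $p(a)$---is precisely what the $x=a$ case of the theorem asserts, so you cannot assume it.

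Your fallback, invoking LP feasibility plus the reconstruction of Section~\ref{subsec:LP}, does not close the gap either. Feasibility gives only the column inequality $\sum_{x_{1:2}}\pi(x_{1:2},a)\le p(a)$. The reconstruction would ordinarily top up any deficit $p(a)-\alpha(a)$ via pairs with $a\notin\{x_1,x_2\}$, but SpecHub's sparse $Q$ has no such pairs (every draft pair contains $a$), so the residual step can never output $a$. Hence ``equality absorbed by the residual'' is unavailable here: you must show the column constraint is met with \emph{equality}, i.e.\ that the two acceptance branches for $a$ jointly supply all of $p(a)$. Concretely this is the inequality
\[
p(a)\ \le\ \sum_{x\neq a}\bigl(Q'(a,x)+Q'(x,a)\bigr),
\]
with $Q'(x,a)=\max(q(x)-p(x),0)$ and $Q'(a,x)=\max(Q(a,x)-p'(x),0)$. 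The paper proves it directly: observe $\sum_{x\neq a}\bigl(Q(a,x)+Q(x,a)\bigr)=1$, then apply $\max(t,0)\ge t$ termwise to get the sum of residual capacities $\ge 1-\sum_{x\neq a}p(x)=p(a)$. That short estimate is the actual crux of the hub case; once you have it, the $\min$'s in $\pi(a,x,a)$ and $\pi(x,a,a)$ are never truncated on the $p(a)$ side and the total telescopes to $p(a)$.
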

\begin{proof}
Given a target distribution \( p \) and a draft distribution \( q \), we need to show that SpecHub generates tokens such that for any token \( x \in \mathcal{V} \), the probability of generating \( x \) under SpecHub, denoted as \( P_{\text{SpecHub}}(x) \), is equal to \( p(x) \).

First, all draft pairs sampled by SpecHub involve the top token \( a = \arg\max_{x \in \mathcal{V}} q(x)\). For all $x \neq a$, pairs \( (x, a) \) and \( (a, x) \) are sampled with probabilities \( Q(x, a) = q(x) \) and \( Q(a, x) = \frac{q(a) q(x)}{1 - q(a)} \), respectively.

For a token \( x \neq a \), in the first draft, SpecHub generates \( x \) with probability 
\begin{align*}
    &\mathbb{P}(x = x^{(1)} \text{ and } X = x) \\
    =\ &Q(x,a) \min\left( \frac{p(x)}{Q(x, a)}, 1 \right) \\
    =\ &\min\left( p(x), q(x) \right).
\end{align*}

In the second draft, given that \( x \neq a \), the residual probability for token \( x \) after the first draft, denoted as \( p'(x) \), is:
\begin{align*}
    p'(x) = &\max(p(x) - q(x), 0)\\
    = &\ p(x) - \min (p(x), q(x))
\end{align*}

SpecHub generates \( x \) in the second draft with probability 
\begin{align*}
    &\mathbb{P}(x = x^{(2)} \text{ and } X = x) \\
    =\ &Q(a, x) \min\left( \frac{p'(x)}{Q(a, x)}, 1 \right) \\
    =\ &\min\left( p(x) - \min(p(x), q(x)), Q(a, x) \right) \\
    =\ &\min\left( p(x) - \min(p(x), q(x)), \frac{q(a) q(x)}{1 - q(a)} \right).
\end{align*}

Now, let's calculate the residual distribution after both drafts for tokens \( x \neq a \). The residual probability \( p''(x) \) for token \( x \) is calculated as follows:
\begin{align*}
&p''(x)  \\
=\ &\max(p'(x)-Q(a,x), 0) \\
=\ &\max\left(p(x) - q(x) - \frac{q(a) q(x)}{1 - q(a)}, 0\right)
\end{align*}

Since \( p''(x) \) represents the remaining probability after both drafts, it ensures that:
\begin{align*}
&\mathbb{P}(X = x) \\
=\ &\mathbb{P}(x = x^{(1)} \text{ and } X = x) \\
&+\mathbb{P}(x = x^{(2)} \text{ and } X = x) \\
&+p''(x) \\
=\  &\min(p(x), q(x)) \\
&+\min\left(p(x) - \min(p(x), q(x)), \frac{q(a) q(x)}{1 - q(a)}\right) \\
&+\max\left(p(x) - q(x) - \frac{q(a) q(x)}{1 - q(a)}, 0\right)\\
=\ &p(x)
\end{align*}

Now for \( x = a \):

In the first draft, SpecHub generates \( a \) with probability 
\begin{align*}
    &\mathbb{P}(a = x^{(1)} \text{ and } X = a) \\
    =\ &\sum_x Q'(a, x)  \min\left( \frac{p(a)}{\sum_x Q'(a, x)}, 1 \right) \\
    =\ &\min\left( p(a), \sum_x Q'(a, x) \right).
\end{align*}

In the second draft, given that \( a = x \), the residual probability for token \( a \) after the first draft, denoted as \( p'(a) \), is:
\begin{align*}
    p'(a) = \max(p(a) - \sum_x Q'(a, x), 0).
\end{align*}

SpecHub generates \( a \) with probability 
\begin{align*}
    &\mathbb{P}(a = x^{(2)} \text{ and } X = a) \\
    =\ &\sum_x Q'(x, a)\min\left( \frac{p'(a)}{\sum_x Q'(x, a)}, 1 \right) \\
    =\ &\min\left( \max(p(a) - \sum_x Q'(a, x), 0), \right.\\
    &\left. \sum_x Q'(x, a) \right)
\end{align*}

The total probability for generating \( a \) is:
\begin{align*}
&\mathbb{P}(X = a) \\
=\ &\mathbb{P}(a = x^{(1)} \text{ and } X = a) \\
&+\mathbb{P}(a = x^{(2)} \text{ and } X = a) \\
=\  &\min\left( p(a), \frac{p(a)}{\sum_x Q'(a, x)} \right) \\
&+\min\left( \max(p(a) - \sum_x Q'(a, x), 0), \right.\\
&\left. \frac{p(a)}{\sum_x Q'(x, a)} \right) \\
=\ &\min\left(p(a), \sum_x Q'(a,x)+Q'(x,a)\right)
\end{align*}

It can be shown that $p(a) < \sum_x Q'(a,x)+Q'(x,a)$. First, since $Q(a,a) = 0)$, we have \begin{align*}
    &\sum_x Q(a,x) + Q(x,a) \\
    =&\sum_{x\in \mathcal{V}\setminus\{a\}} q(x) + \frac{q(a)q(x)}{1-q(a)}\\
    =\ &1
\end{align*}
Also, we have $p(a) = 1 - \sum_{x\in \mathcal{V}\setminus\{a\}} p(x)$. 
Thus, 
\begin{align*}
    &\sum_{x\in\mathcal{V}\setminus\{a\}}Q'(a,x) + Q'(x,a)\\
    =\ &\sum_{x\in\mathcal{V}\setminus\{a\}} (\max(Q(a,x)-p(x), 0)\\
    & +\max(Q(x,a)-p'(x), 0))\\
    =\ &\sum_{x\in\mathcal{V}\setminus\{a\}} \max(Q(a,x)+Q(x,a)-p(x),0)\\
    \geq\ &\sum_{x\in\mathcal{V}\setminus\{a\}}Q(a,x)+Q(x,a)-p(x)\\
    =\ &\sum_{x\in\mathcal{V}\setminus\{a\}}Q(a,x)+Q(x,a) -\sum_{x\in\mathcal{V}\setminus\{a\}}p(x)\\
    =\ &1 - (1 - p(a)) = p(a)
\end{align*}

Thus, for any token \( x \in \mathcal{V} \), the probability of generating \( x \) under SpecHub is equal to \( p(x) \), ensuring that the output distribution matches the target distribution \( p \).
\end{proof}

As a corrolary of the last part of the proof, SpecHub accepts as much top token $a$ as $p(a)$. 
\begin{corollary}[Top Token Acceptance]
\label{lemma:top_token}
    Given a draft distribution \( q \) and a target distribution \( p \), let \( a = \arg\max_{x \in \mathcal{V}} q(x) \) denote the token with the highest draft probability. Then, SpecHub generates token \( a \) with probability \( p(a) \).
    
\end{corollary}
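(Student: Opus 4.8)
The plan is to read off Corollary~\ref{lemma:top_token} directly from the correctness proof of Theorem~\ref{thm:SpecHub_correctness}, since the statement is exactly the intermediate quantity $\mathbb{P}(X=a)$ computed there. Recall that in SpecHub every sampled draft pair contains the top token $a$ in one of its two slots, so the only events in which $a$ is \emph{generated} (accepted) are (i) $a=x^{(1)}$ and the first-draft acceptance test passes, or (ii) $a=x^{(2)}$ and the second-draft acceptance test for $a$ passes. The residual distribution over non-top tokens $x\neq a$ is set up via \texttt{GetResidual} so that $a$ is never drawn from the residual, i.e.\ $q'(a)=0$ and $p''(a)$ is fully consumed by the two acceptance branches. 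Thus $\mathbb{P}(X=a)$ equals the sum of the two acceptance probabilities for $a$, and nothing else.

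The key steps, in order, are as follows. First I would recall from the proof that
\[
\mathbb{P}(X=a)=\mathbb{P}(a=x^{(1)}\text{ and }X=a)+\mathbb{P}(a=x^{(2)}\text{ and }X=a),
\]
and substitute the two acceptance expressions, which after cancellation collapse to
\[
\mathbb{P}(X=a)=\min\!\left(p(a),\ \sum_x Q'(a,x)+Q'(x,a)\right).
\]
Second, I would invoke the inequality established at the end of the proof, namely $\sum_x Q'(a,x)+Q'(x,a)\ge p(a)$. The minimum therefore selects $p(a)$, giving $\mathbb{P}(X=a)=p(a)$, which is exactly the claim.

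The only nontrivial ingredient is the inequality $\sum_x\bigl(Q'(a,x)+Q'(x,a)\bigr)\ge p(a)$, and I would treat this as the main obstacle to verify carefully. The argument rests on two facts from the construction: the raw row-plus-column mass is normalized, $\sum_{x\neq a}\bigl(Q(a,x)+Q(x,a)\bigr)=\sum_{x\neq a}\bigl(q(x)+\tfrac{q(a)q(x)}{1-q(a)}\bigr)=1$ (using $Q(a,a)=0$), and $p(a)=1-\sum_{x\neq a}p(x)$. Applying the elementary bound $\max(u,0)\ge u$ to each residual term and summing, the leftover mass $\sum_{x\neq a}\bigl(Q(a,x)+Q(x,a)-p(x)\bigr)$ equals $1-(1-p(a))=p(a)$, so the genuine (nonnegative) residuals sum to at least $p(a)$. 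Since this chain of equalities is already spelled out verbatim in the proof of Theorem~\ref{thm:SpecHub_correctness}, the corollary follows immediately as its final line, requiring no new computation.
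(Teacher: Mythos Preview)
Your proposal is correct and follows essentially the same route as the paper: the corollary is stated immediately after Theorem~\ref{thm:SpecHub_correctness} precisely because the computation $\mathbb{P}(X=a)=\min\bigl(p(a),\sum_x Q'(a,x)+Q'(x,a)\bigr)$ together with the inequality $\sum_x\bigl(Q'(a,x)+Q'(x,a)\bigr)\ge p(a)$ is carried out verbatim in the last block of that proof. The only cosmetic point is that your sentence ``$a$ is never drawn from the residual'' is a \emph{consequence} of the inequality (since it forces $p''(a)=0$), not an independent premise from \texttt{GetResidual}; but you do establish the inequality afterward, so the argument is complete.
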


\subsection{Acceptance Rate}
\label{appendix:acceptance_rate}

We here prove a sufficient condition for SpecHub to run faster than RRS. 

\begin{theorem} [Superiority over RRS]
\label{thm:superiority_over_rrs}
    Let $\alpha = \sum_{x \in \mathcal{V}} \min(q(x), p(x))$ be the acceptance rate of the first draft. SpecHub has a higher acceptance rate in the second draft if $\frac{q(a)}{1-q(a)} > 1 - \alpha$.
\end{theorem}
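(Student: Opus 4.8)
The plan is to compute the second-draft acceptance rate for both SpecHub and RRS explicitly, then compare them under the stated hypothesis. The key asymmetry to exploit is that SpecHub guarantees the top token $a$ is never undersampled (Corollary~\ref{lemma:top_token}), whereas RRS treats all tokens uniformly and suffers from residual-draft misalignment in the second round.

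First I would write down the second-draft acceptance rate of RRS. After the first draft is accepted or rejected, RRS resamples a fresh $x_2 \sim q$ and accepts it against the residual target $\text{norm}(\max(p-q,0))$. Because the first draft acceptance rate is $\alpha$, the residual target carries total mass $1-\alpha$, and the probability of accepting the second draft is the overlap $\sum_x \min\bigl(q(x), \text{res}(x)\bigr)$ where $\text{res} = \max(p-q,0)$. I would bound or express this overlap; the crucial observation is that since $\text{res}(x) \le p(x)$ pointwise and the draft $q$ is reused, the overlap is constrained by how much residual mass survives, giving an RRS second-draft acceptance no larger than roughly $(1-\alpha)$ times an overlap factor. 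The cleanest route is to express the RRS second-draft rate and show it is upper-bounded by a quantity controlled by $1-\alpha$.

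Next I would compute SpecHub's second-draft acceptance rate directly from the transport plan in Appendix~\ref{appendix:pseudocode}. In SpecHub every sampled pair contains $a$, and the second draft corresponds to the pairs $(a, x_2)$ with $x_2 \ne a$ sampled with probability $Q(a,x_2) = \frac{q(a)q(x_2)}{1-q(a)}$. The mass accepted in the second draft is $\sum_{x \ne a} \pi(a,x,x) = \sum_{x\ne a}\min\bigl(p(x)-\min(p(x),q(x)),\ \tfrac{q(a)q(x)}{1-q(a)}\bigr)$, plus the second-draft acceptance of $a$ itself via the $\pi(x,a,a)$ term. I would factor out $\frac{q(a)}{1-q(a)}$ and show that the second-draft acceptance scales with this ratio: the larger $q(a)$ is, the more probability mass $Q(a,\cdot)$ carries, so more of the residual $(p-q)^+$ can be covered in the second round. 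The condition $\frac{q(a)}{1-q(a)} > 1-\alpha$ is exactly the threshold at which SpecHub's second-draft coverage exceeds what RRS can achieve from the misaligned residual.

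The main obstacle I anticipate is handling the pointwise $\min$ and $\max$ operators cleanly enough to get a clean inequality rather than a case explosion. The comparison ultimately reduces to showing $\sum_{x\ne a}\min\bigl((p(x)-q(x))^+, \tfrac{q(a)q(x)}{1-q(a)}\bigr)$ dominates the RRS second-draft overlap; the inequality $\frac{q(a)}{1-q(a)} > 1-\alpha$ should let me argue that the SpecHub budget $\tfrac{q(a)q(x)}{1-q(a)}$ is large enough that the $\min$ is governed by the residual term $(p(x)-q(x))^+$ for enough tokens to win. I would likely split the vocabulary into oversampled tokens (where $q > p$) and undersampled tokens (where $p > q$), handle the undersampled mass $\sum (p-q)^+ = 1-\alpha$ as the shared target, and show SpecHub captures a strictly larger share of it whenever the hub ratio clears the $1-\alpha$ bar. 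The subtlety is ensuring the $a$-specific terms do not eat into this budget, which is where Corollary~\ref{lemma:top_token} and the normalization $\sum_x Q(a,x)+Q(x,a)=1$ established in the correctness proof become essential.
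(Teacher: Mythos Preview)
Your approach is essentially the paper's: compute the per-token second-draft acceptance for both methods and compare. You already have the SpecHub side, $\min\bigl((p(x)-q(x))^+,\ \tfrac{q(a)}{1-q(a)}\,q(x)\bigr)$ for $x\ne a$, and you correctly invoke Corollary~\ref{lemma:top_token} to dispose of the $a$ term. What you are missing is that the RRS side has exactly the same shape: conditional on first-draft rejection (probability $1-\alpha$), a fresh $x\sim q$ is accepted against the normalized residual $p'(x)=\tfrac{(p(x)-q(x))^+}{1-\alpha}$, so the unconditional mass that the second draft places on token $x$ is
\[
(1-\alpha)\,q(x)\min\!\Bigl(\tfrac{p'(x)}{q(x)},1\Bigr)=\min\bigl((p(x)-q(x))^+,\ (1-\alpha)\,q(x)\bigr).
\]
Once both expressions are written as $\min\bigl((p(x)-q(x))^+,\ C\cdot q(x)\bigr)$ with $C_{\mathrm{SpecHub}}=\tfrac{q(a)}{1-q(a)}$ and $C_{\mathrm{RRS}}=1-\alpha$, the hypothesis $C_{\mathrm{SpecHub}}>C_{\mathrm{RRS}}$ gives a \emph{pointwise} inequality for every $x\ne a$, and summing yields the result immediately. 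The case explosion, vocabulary split, and budget arguments you anticipate are unnecessary; the paper's proof is four lines once the RRS formula is in this form.
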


\begin{proof}
First, by Lemma~\ref{lemma:top_token}, SpecHub generates the top token $a$ with probability $p(a)$. This maximizes the acceptance rate for $a$.
Next, we calculate the second draft acceptance rate for every other token $x \in \mathcal{V} \setminus \{a\}$.

For RRS, the acceptance rate for token $x$ in the first draft is $\min(p(x), q(x))$. The residual probability for token $x$ after the first draft, denoted as $r(x)$, is:
\[
p'(x) = \frac{p(x) - \min(p(x), q(x))}{1 - \alpha}
\]
where $\alpha = \sum_{x \in \mathcal{V}} \min(p(x), q(x))$ is the overall acceptance rate in the first draft.
The second draft acceptance rate for token $x$ under RRS is then:
\[
(1 - \alpha) \min\left(\frac{p(x) - \min(p(x), q(x))}{1 - \alpha}, q(x)\right)
\]
which simplifies to:
\[
\min\left(p(x) - \min(p(x), q(x)), (1 - \alpha) q(x)\right)
\]

For SpecHub, the second draft acceptance rate for token $x$ is:
\[
\min\left(p(x) - \min(p(x), q(x)), \frac{q(a)}{1 - q(a)} q(x)\right)
\]

Comparing these rates shows that SpecHub has a higher acceptance rate if $\frac{q(a)}{1 - q(a)} > 1 - \alpha$.
\end{proof}

In practice, this condition is usually satisfied. For example, if $\alpha = 0.5$, then as long as the top token has probability $q(a) > \frac{1}{3} = 0.333$, we guarantee acceleration. Meanwhile, since SpecHub accepts top tokens up to $p(a)$, the above sufficient conditions become necessary only in unusual cases when $p(a) = 0$. 

Using a similar proof strategy, we can show it guarantees to outperform OTM with independent sampling in rare cases. 

\begin{theorem} [Superiority over OTM]
\label{thm:superiority_over_OTM}
    SpecHub guarantees a higher total acceptance rate compared to OTM with independent sampling if $q(a) > 1/2$.
\end{theorem}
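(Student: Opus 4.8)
The plan is to upper-bound the OTM acceptance rate by a sum of per-token quantities and then show SpecHub matches or beats each term under the hypothesis $q(a) > 1/2$. First I would recall that for any coupling $\pi$ in the OTM problem with independent sampling $Q = q^{\otimes 2}$, its acceptance probability equals $\Pr_{\pi}[\,y \in \{x_1, x_2\}\,] = \sum_{y \in \mathcal{V}} \Pr_{\pi}[\,X = y \text{ and } y \in \{x_1,x_2\}\,]$. For each fixed $y$, this summand is at most $\Pr_{\pi}[X=y] = p(y)$ by the target marginal constraint, and at most $\Pr[\,y \in \{x_1,x_2\}\,] = 1-(1-q(y))^2 = q(y)(2-q(y))$ by the draft marginal constraint. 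Hence $1 - C(\pi^*) \le \sum_{y} \min\!\big(p(y),\, q(y)(2-q(y))\big)$, an upper bound on the best achievable OTM acceptance rate.

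Next I would write out SpecHub's per-token acceptance contribution. By Corollary~\ref{lemma:top_token}, the hub token $a$ is accepted with probability exactly $p(a)$, so its contribution is $p(a) \ge \min(p(a),\, q(a)(2-q(a)))$. For $x \neq a$, the first-draft acceptance is $\min(p(x),q(x))$ and, exactly as computed in the proof of Theorem~\ref{thm:superiority_over_rrs}, the second-draft acceptance is $\min\!\big(p(x)-\min(p(x),q(x)),\, \tfrac{q(a)}{1-q(a)}\, q(x)\big)$, for a total of $s(x) := \min(p(x),q(x)) + \min\!\big(p(x)-\min(p(x),q(x)),\, \tfrac{q(a)}{1-q(a)}\, q(x)\big)$.

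Then I would verify $s(x) \ge \min(p(x),\, q(x)(2-q(x)))$ termwise. If $p(x)\le q(x)$ the second term vanishes and $s(x)=p(x)$; if $p(x)>q(x)$ and $\tfrac{q(a)}{1-q(a)}q(x) \ge p(x)-q(x)$ then again $s(x)=p(x)$; in both situations $s(x)=p(x)\ge \min(p(x),\, q(x)(2-q(x)))$. The only remaining case is $p(x)>q(x)$ together with $\tfrac{q(a)}{1-q(a)}q(x) < p(x)-q(x)$, where $s(x)=q(x)\big(1+\tfrac{q(a)}{1-q(a)}\big)=\tfrac{q(x)}{1-q(a)}$. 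Here the hypothesis enters: $q(a)>1/2$ gives $\tfrac{1}{1-q(a)}>2\ge 2-q(x)$, so $s(x)=\tfrac{q(x)}{1-q(a)} > q(x)(2-q(x)) \ge \min(p(x),\, q(x)(2-q(x)))$. Summing over all tokens yields SpecHub's total acceptance $\ge \sum_y \min\!\big(p(y),\, q(y)(2-q(y))\big) \ge 1 - C(\pi^*)$, which is the claim.

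The main obstacle I anticipate is justifying the OTM upper bound cleanly: the membership bound $\Pr_{\pi}[X=y \text{ and } y \in \{x_1,x_2\}] \le 1-(1-q(y))^2$ should be argued through the joint event ``$y$ is drawn and $y$ is output,'' which sidesteps the double-counting that would otherwise arise when $x_1=x_2$ in the LP objective. The termwise comparison is then routine, with the single delicate case, an oversampled $x$ whose residual mass exceeds the hub's transport capacity, being precisely where $q(a)>1/2$ (equivalently $\tfrac{1}{1-q(a)}>2$) is needed.
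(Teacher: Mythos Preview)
Your proposal is correct and follows essentially the same approach as the paper: both upper-bound the per-token OTM acceptance by the draft-marginal probability that the token appears among the two drafts, and compare this to SpecHub's per-token acceptance capacity $\tfrac{q(x)}{1-q(a)}$, using $q(a)>1/2 \Leftrightarrow \tfrac{1}{1-q(a)}>2$. Your version is more careful than the paper's: you keep the exact expression $q(y)(2-q(y))$ rather than the paper's looser $2q(y)$, and you explicitly carry the $\min$ with $p(y)$ through the termwise comparison (handling the cases where $p(x)$ caps SpecHub's acceptance), whereas the paper leaves that implicit.
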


\begin{proof}
    Let $Q = q^{\otimes 2}$. Then, for a token $x$, the highest rate acceptance is upper bounded by the probability that it is contained in any draft pair with probability $1- (1-q(x))^2 < 2q(x)$. Meanwhile, for the first and second drafts, the acceptance rate when using SpecHub is $\frac{q(a)}{1 - q(a)} q(x) + q(x) = \frac{q(x)}{1 - q(a)}$.  Thus, we can accept more tokens $x$ if $\frac{q(x)}{1 - q(a)} > 2q(x)$, or $q(a) > \frac{1}{2}$.
\end{proof}

Compared to the previous theorem, this bound is nowhere near as tight since we are using a loose upper bound on OTM's performance. In reality we expect OTM to perform worse.

\subsection{First Draft Acceptance Rate}
\label{appendix:first_draft}

SpecHub is designed to optimize the acceptance rate across multiple drafts, but in rare cases, it might slightly decrease the acceptance rate of the top token in the first draft. This situation occurs when the probability of the top token in the target distribution satisfies $p(a)> q(a)$ and another token $x$ takes some of the probability mass $Q(a,x)$. However, our empirical evaluations in Table~\ref{tab:first_draft_acc} demonstrate that this effect is not noticeable in practice, as the acceptance rates of the first draft remain high.
\begin{table}[ht]

\centering
\caption{First Draft Acceptance Rates for SpecHub and RRSw across different models and temperatures.}
\begin{tabular}{cccc}

\toprule
$T$ & Draft & SpecHub & RRSw \\
\midrule
0.3 & JF68m & 0.4921 & 0.4498 \\
    & JF160m & 0.5578 & 0.5465 \\
0.6 & JF68m & 0.4842 & 0.4821 \\
    & JF160m & 0.5632 & 0.5587 \\
1   & JF68m & 0.4248 & 0.4418 \\
    & JF160m & 0.5130 & 0.5257 \\
\bottomrule
\end{tabular}
\label{tab:first_draft_acc}
\end{table}

\section{A discussion on more drafts}
\label{appendix:more_drafts}

\subsection{Diminishing Returns of Increasing Drafts}
While theoretically appealing, using more drafts in practice offers diminishing returns. As we increase the number of drafts, the probability mass of the residual distribution decreases, leading to lower acceptance rates for subsequent drafts. This phenomenon is illustrated in Figure \ref{fig:acc_T1}, where we present the acceptance rates for up to 10 drafts using both RRSw and RRS with temperature $T=1.0$. As evident from the plots, the acceptance rate drastically decreases after the first few drafts, suggesting that the benefit of using more than 5 drafts is negligible.
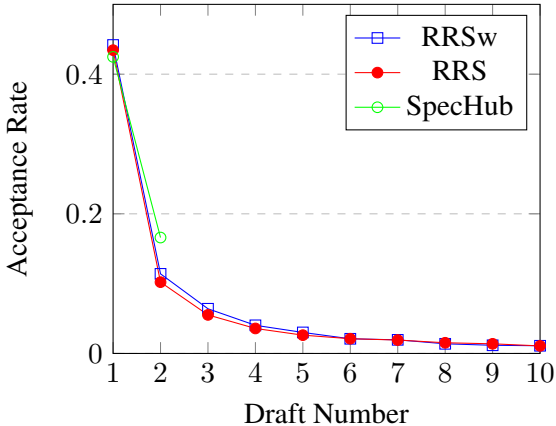
\begin{figure}[ht]
    \centering
    \begin{tikzpicture}
        \begin{axis}[
            xlabel=Draft Number,
            ylabel=Acceptance Rate,
            xmin=1, xmax=10, 
            ymin=0, ymax=0.5,
            xtick={1,2,3,...,10},
            legend pos=north east,
            ymajorgrids=true,
            grid style=dashed,
            width=0.45\textwidth,
        ]
        
        \addplot[
            color=blue,
            mark=square,
        ]
            coordinates {
                (1, 0.4418)(2, 0.1140)(3, 0.0640)(4, 0.0403)(5, 0.0301)
                (6, 0.0210)(7, 0.0195)(8, 0.0138)(9, 0.0117)(10, 0.0110)
            };
            \addlegendentry{RRSw}
            
        \addplot[
            color=red,
            mark=*,
        ]
            coordinates {
                (1, 0.4344)(2, 0.1021)(3, 0.0552)(4, 0.0358)(5, 0.0262)
                (6, 0.0211)(7, 0.0192)(8, 0.0153)(9, 0.0138)(10, 0.0108)
            };
            \addlegendentry{RRS}

        \addplot[ 
            color=green,
            mark=o,
        ]
            coordinates {
                (1, 0.4248)(2, 0.166)
            };
            \addlegendentry{SpecHub}
            
        \end{axis}
    \end{tikzpicture}
    \caption{Acceptance rate decay for different drafts with temperature $T=1.0$.}
    \label{fig:acc_T1}
\end{figure}
\subsection{Curse of Dimensionality}
The computational complexity of finding the optimal coupling in Multi-Draft Speculative Decoding grows exponentially with the number of drafts. This is often referred to as the curse of dimensionality. Specifically, the number of variables in the LP formulation is on the order of $O(|\mathcal{V}|^{k+1})$, where $|\mathcal{V}|$ is the vocabulary size and $k$ is the number of drafts. As $k$ increases, solving the LP becomes computationally intractable for even moderately sized vocabularies.

\begin{table*}[t]
    \centering
\caption{{\bf Acceptance Rates for Toy Experiments} The acceptance rates for SpecHub, Recursive Rejection Sampling (RRS), and Optimal Transport (OTM) algorithms using toy example drafts and target distributions. $T$ represents the temperature, and $\lambda$ controls the similarity between the draft and target distributions. We highlight the \textbf{best}, \underline{second best}, and \textit{third best} entries. }
    \label{tab:toy_experiments}
    \begin{tabular}{cc|ccccl}
        \toprule
        T & $\lambda$ & RRS & RRSw & OTM & OTMw & SpecHub \\
        \midrule
        0.1  & 0.7 & 0.6273 & \textit{0.7120} & 0.6380 & \underline{0.7345} & \textbf{0.7402} \\
        0.1  & 0.5 & 0.3323 & \textit{0.4057} & 0.3346 & \textbf{0.4125} & \underline{0.4123} \\
        0.25 & 0.7 & 0.7354 & 0.7653 & \textit{0.7846} & \textbf{0.8321} & \underline{0.8113} \\
        0.25 & 0.5 & 0.4564 & \underline{0.4978} & 0.4743 & \textbf{0.5245} & \textit{0.4968} \\
        0.5  & 0.7 & 0.8090 & 0.8122 & \underline{0.9037} & \textbf{0.9150} & \textit{0.8500} \\
        0.5  & 0.5 & 0.6456 & \textit{0.6593} & \underline{0.7052} & \textbf{0.7206} & 0.6403 \\
        \bottomrule
    \end{tabular}
\end{table*}
\subsection{Potential for Sparse Algorithms on more drafts}
The diminishing returns of additional drafts and the curse of dimensionality suggest that a practical approach should focus on a small number of drafts while ensuring an efficient probability of mass transport. One promising direction is to explore sparse algorithms that leverage the structure of the draft and target distributions. For instance, instead of considering all possible combinations of drafts, we can prioritize those with higher sampling probabilities or those that exhibit significant overlap between the draft and target distributions.
One potential approach is to extend the "hub" concept of SpecHub to multiple drafts. Instead of designating a single token as the hub, we can identify a small set of high-probability tokens and create a sparse flow network where probability mass is primarily transported through these hubs. This approach could potentially maintain high acceptance rates while significantly reducing the computational complexity compared to solving the full LP. Further research in this direction could lead to more efficient and scalable algorithms for MDSD.

\section{Comparing SpecHub to OTM in Toy Settings}

Here, we seek to compare OTM, RRS, and SpecHub's performance by measuring the acceptance rate of the three algorithms using a few toy example drafts and target distributions with a small vocab size $|\mathcal{V}| = 50$ in Table~\ref{tab:toy_experiments}. Given temperature $T$ and a hyperparameter $\lambda$ that controls the similarity between the two distributions, we generate two logits using uniform distributions such that $u_p \sim \text{Unif}(0,1)^{\otimes |\mathcal{V}|}$ and $u_q \sim \text{Unif}(0,1)^{\otimes |\mathcal{V}|}$. The corresponding target and draft distributions are $p = \text{softmax}(\frac{u_p}{T})$ and $q = \text{softmax}(\lambda \frac{u_p}{T} + (1-\lambda) \frac{u_q}{T})$. We calculate the acceptance rate for all methods theoretically except for RRS without replacement, where we perform a Monte-Carlo Simulation with a thousand repetitions. We conduct the experiment on a hundred pairs of toy distributions and report the average. 
The results in Table~\ref{tab:toy_experiments} quantitatively illustrate the performance differences among SpecHub, Recursive Rejection Sampling (RRS), RRS without replacement, and Optimal Transport (OTM) methodologies under varying conditions of temperature $T$ and similarity parameter $\lambda$. In high similarity scenarios ($\lambda = 0.7$), SpecHub outperforms other methods significantly at lower temperatures ($T=0.1$), achieving the best acceptance rate of \textbf{0.7402}, closely followed by OTM without replacement at \underline{0.7345}. At higher temperatures ($T=0.5$), OTM methods, particularly OTM without replacement, dominate, marking the best performance with \textbf{0.9150} at $T=0.5$ and $\lambda=0.7$. This suggests that SpecHub is particularly effective in tightly controlled environments with high similarity between distributions and low entropy, whereas OTM shines with increased distribution complexity. SpecHub's consistent performance across different conditions emphasizes its robustness, particularly when distribution similarity is moderate ($\lambda = 0.5$), where it maintains competitive acceptance rates, closely trailing the best results.

\section{Maximum Flow Problem Formulation}
\label{subsec:max_flow}

At $k=2$, our Linear Programming (LP) formulation describes an equivalent Maximum Flow Problem formulation. This formulation effectively models the Multi-Draft Speculative Decoding process as the transportation of probability mass through a network of pipes.

Given an LP formulation with vocabulary set $\mathcal{V}$, pair sampling distribution $Q\in \Delta^{|\mathcal{V}|^2-1}$, and target distribution $p\in \Delta^{|\mathcal{V}|-1}$, we construct a graph $G = (V, E)$ where the vertex set $V$ consists of the vocabulary $\mathcal{V}$, a source vertex $s$, and a sink vertex $t$. The capacity function $g: (u,v) \in E \to [0,1]$ is defined for each edge as follows:
\begin{align*}
g(u,v) = \begin{cases}
    \sum_{x_2} Q_{vx_2}, & \text{if } u = s \text{ and } v \in \mathcal{V},\\
    p(v), & \text{if } u \in \mathcal{V} \text{ and } v = t,\\
    Q_{uv}, & \text{if } u, v\in \mathcal{V} \text{ and } u \neq v,\\
    0, & \text{otherwise}.
\end{cases}
\end{align*}
In this formulation, the source vertex $s$ distributes the total probability mass to the vertices in the vocabulary set $\mathcal{V}$, while the sink vertex $t$ collects the transported probability mass from the vocabulary vertices. The edges between the vocabulary vertices represent the possible transitions dictated by the pair sampling distribution $Q$. This network flow model not only provides an intuitive visualization of the probability mass transport process but also allows us to leverage well-established algorithms in network flow theory to solve the MDSD problem efficiently.

\section{More Experiment Details}
\label{appendix:experiments}

\paragraph{JF68m on Full Binary Trees and Binary Sequoia Unbalanced Trees}
We conducted experiments to measure the batch efficiency of the JF68m model on both full binary trees and binary Sequoia unbalanced trees. For the full binary trees, we tested tree depths ranging from $d=2$ to $d=5$, and for the binary Sequoia trees, we used an unbalanced tree structure with varying depths. The results demonstrate that SpecHub consistently outperforms both RRS and RRSw across all tree depths. In the full binary tree configuration, SpecHub achieves a batch efficiency improvement of $0.02-0.10$ over RRSw and $0.04-0.20$ over RRS at temperatures $T=0.6$ and $1.0$. For the binary Sequoia unbalanced trees, SpecHub maintains a higher batch efficiency, confirming its robustness on the more efficient unbalanced tree structures. 

\paragraph{JF160m on Binary and Ternary Trees}
We also evaluated the batch efficiency of RRS and RRSw using the JF160m model on both binary and ternary trees. For binary trees, we tested tree depths from $d=2$ to $d=6$, and for ternary trees, we considered depths up to $d=4$. The JF160m model shows significant improvements in batch efficiency when using SpecHub. At temperatures $T=0.6$ and $1.0$, SpecHub outperforms RRS by $0.03-0.12$ and RRSw by $0.05-0.15$ in binary tree configurations. The performance of RRS and RRSw in the ternary tree setting is worse than SpecHub on binary trees, suggesting the benefit of using more drafts is less significant. 

\paragraph{EAGLE Decoding Head}
To further explore the efficiency of our proposed method, we implemented the SpecHub algorithm using the EAGLE decoding head. The batch efficiency was evaluated on binary trees of depths $d=2$ to $d=5$. SpecHub with the EAGLE decoding head shows a substantial increase in efficiency, generating up to $3.53$ and $3.33$ tokens per iteration at temperatures $T=0.6$ and $1.0$, respectively. This represents an additional $0.08$ tokens per iteration compared to RRS without replacement. The experimental results reinforce the benefits of integrating SpecHub with advanced decoding heads like EAGLE, particularly in enhancing batch efficiency.

\paragraph{Larger Models} 
In addition to the results reported in the main paper, we conducted further experiments on larger models, specifically Llama 2-13B-Chat and Vicuna-1.3-33B, to evaluate the scalability of SpecHub. As shown in Tables~\ref{tab:batch-efficiency-llama13b} and \ref{tab:batch-efficiency-vicuna33b}, SpecHub consistently outperforms both Recursive Rejection Sampling (RRS) and RRS without replacement across all configurations of tree depth and temperature ($T=0.6$ and $T=1.0$). For Llama 2-13B-Chat, SpecHub achieves up to 2.77 tokens per step on the OpenWebText dataset, while for Vicuna-33B, it generates up to 3.32 tokens per step on the MT-Bench dataset. These results highlight SpecHub's ability to maintain high batch efficiency and token acceptance rates as model size increases, demonstrating its robust scalability when applied to larger language models.

\begin{table*}
\caption{{\bf Batch Efficiency Results for JF68m Data} Average accepted tokens and batch efficiency for different configurations of target model and draft model pairs across various temperatures. SpecHub consistently outperforms RRS and RRSw in both acceptance rate and batch efficiency. We also include binary Sequoia trees and show that SpecHub performs well on unbalanced trees.}
\label{tab:batch-efficiency-68}
\centering
\begin{tabular}{c|c|c|c|c|c|c|c|c|c}
\hline
T & Data & Tree & RRS & RRSw & SpecHub & Tree & RRS & RRSw & SpecHub \\
\hline
0.6 & CNN & $2^2$ & 1.5540 & 1.5997 & \bfseries 1.6157 & biSeq4 & 1.7938 & 1.8304 & \bfseries 1.8498 \\
0.6 & OWT & $2^2$ & 1.5485 & 1.5895 & \bfseries 1.6080 & biSeq4 & 1.7971 & 1.8225 & \bfseries 1.8424 \\
0.6 & CNN & $2^3$ & 1.8482 & 1.9685 & \bfseries 1.9863 & biSeq8 & 2.0361 & 2.1540 & \bfseries 2.1542 \\
0.6 & OWT & $2^3$ & 1.8576 & 1.9241 & \bfseries 1.9632 & biSeq8 & 2.0247 & 2.1005 & \bfseries 2.1285 \\
0.6 & CNN & $2^4$ & 2.0510 & 2.1694 & \bfseries 2.2456 & biSeq16 & 2.1354 & 2.2667 & \bfseries 2.2839 \\
0.6 & OWT & $2^4$ & 2.0256 & 2.1299 & \bfseries 2.2103 & biSeq16 & 2.1378 & \bfseries 2.2153 & 2.2064 \\
0.6 & CNN & $2^5$ & 2.1385 & 2.3149 & \bfseries 2.4031 & biSeq32 & 2.2452 & 2.4198 & \bfseries 2.4353 \\
0.6 & OWT & $2^5$ & 2.0867 & 2.2295 & \bfseries 2.3416 & biSeq32 & 2.2007 & 2.3556 & \bfseries 2.3868 \\
\hline
1.0 & CNN & $2^2$ & 1.5432 & 1.5521 & \bfseries 1.5997 & biSeq4 & 1.7401 & 1.7469 & \bfseries 1.8057 \\
1.0 & OWT & $2^2$ & 1.5488 & 1.5509 & \bfseries 1.5905 & biSeq4 & 1.7355 & 1.7437 & \bfseries 1.7879 \\
1.0 & CNN & $2^3$ & 1.8384 & 1.8790 & \bfseries 1.9832 & biSeq8 & 1.9522 & 2.0063 & \bfseries 2.0667 \\
1.0 & OWT & $2^3$ & 1.8232 & 1.8585 & \bfseries 1.9661 & biSeq8 & 1.9304 & 2.0008 & \bfseries 2.0720 \\
1.0 & CNN & $2^4$ & 1.9762 & 2.0441 & \bfseries 2.2106 & biSeq16 & 2.0529 & 2.1662 & \bfseries 2.2843 \\
1.0 & OWT & $2^4$ & 1.9954 & 2.0493 & \bfseries 2.1957 & biSeq16 & 2.0330 & 2.1030 & \bfseries 2.2619 \\
1.0 & CNN & $2^5$ & 2.0694 & 2.1383 & \bfseries 2.3104 & biSeq32 & 2.1197 & 2.1604 & \bfseries 2.3445 \\
1.0 & OWT & $2^5$ & 2.0890 & 2.1574 & \bfseries 2.3149 & biSeq32 & 2.1008 & 2.1950 & \bfseries 2.3571 \\
\hline
\end{tabular}
\end{table*}

\begin{table*}
\caption{{\bf Batch Efficiency Results for JF160m Data} Average accepted tokens and batch efficiency for different configurations of target model and draft model pairs at $T=0.6$ and $T=1.0$. The results are presented for CNN and OpenWebText datasets, comparing RRS, RRS without replacement, and TransportHub. We also contained results on ternary trees to showcase that using $k>2$ gives diminishing gain. }
\label{tab:batch-efficiency-160}
\centering
\begin{tabular}{c|c|c|c|c|c}
\hline
T & Data & Tree & RRS & RRS w/o & SpecHub \\
\hline
0.6 & CNN & $2^2$ & 1.633994691 & 1.667634674 & \bfseries 1.6861 \\
0.6 & OpenWebText & $2^2$ & 1.641550493 & 1.672971282 & \bfseries 1.677 \\
0.6 & CNN & $2^3$ & 2.016376307 & 2.142804292 & \bfseries 2.1758 \\
0.6 & OpenWebText & $2^3$ & 2.052868003 & 2.113952048 & \bfseries 2.115 \\
0.6 & CNN & $3^2$ & 1.66262118 & 1.734944266 & \diagbox{}{} \\ 
0.6 & OpenWebText & $3^2$ & 1.669826224 & 1.70473377 & \diagbox{}{} \\
0.6 & CNN & $2^4$ & 2.282944241 & 2.369522017 & \bfseries 2.4841 \\
0.6 & OpenWebText & $2^4$ & 2.28490566 & 2.411659014 & \bfseries 2.4492 \\
0.6 & CNN & $3^3$ & 2.113219655 & 2.279599835 & \diagbox{}{} \\
0.6 & OpenWebText & $3^3$ & 2.111602497 & 2.212962963 & \diagbox{}{} \\
0.6 & CNN & $2^5$ & 2.378323523 & 2.604486152 & \bfseries 2.7238 \\
0.6 & OpenWebText & $2^5$ & 2.449243411 & 2.642651616 & \bfseries 2.6901 \\
0.6 & CNN & $3^4$ & 2.39760652 & 2.681949084 & \diagbox{}{} \\
0.6 & OpenWebText & $3^4$ & 2.433582166 & 2.667044296 & \diagbox{}{} \\
\hline
1.0 & CNN & $2^2$ & 1.608515798 & 1.633187465 & \bfseries 1.6748 \\
1.0 & OpenWebText & $2^2$ & 1.633351663 & 1.635781207 & \bfseries 1.6834 \\
1.0 & CNN & $2^3$ & 1.959878368 & 2.053886546 & \bfseries 2.1362 \\
1.0 & OpenWebText & $2^3$ & 2.028797337 & 2.077786547 & \bfseries 2.1584 \\
1.0 & CNN & $3^2$ & 1.663016602 & 1.689861121 & \diagbox{}{} \\
1.0 & OpenWebText & $3^2$ & 1.677094972 & 1.701585742 & \diagbox{}{}\\
1.0 & CNN & $2^4$ & 2.20357984 & 2.286009649 & \bfseries 2.4204 \\
1.0 & OpenWebText & $2^4$ & 2.295532975 & 2.379759419 & \bfseries 2.4922 \\
1.0 & CNN & $3^3$ & 2.105012354 & 2.165854573 & \diagbox{}{} \\
1.0 & OpenWebText & $3^3$ & 2.166307084 & 2.233691623 & \diagbox{}{} \\
1.0 & CNN & $2^5$ & 2.315296164 & 2.41812897 & \bfseries 2.6624 \\
1.0 & OpenWebText & $2^5$ & 2.429887821 & 2.532017591 & \bfseries 2.7334 \\
1.0 & CNN & $3^4$ & 2.382244389 & 2.474047719 & \diagbox{}{} \\
1.0 & OpenWebText & $3^4$ & 2.467950678 & 2.550284031 & \diagbox{}{} \\
\hline
\end{tabular}
\end{table*}

\begin{table}[h]
\caption{{\bf Batch Efficiency Results for SpecHub and RRS using EAGLE} The batch efficiency of SpecHub and Recursive Rejection Sampling (RRS) methods when applied with EAGLE. The table reports average accepted tokens per step across different temperatures and datasets, demonstrating that SpecHub consistently outperforms RRS.}
\label{tab:eagle_spechub_rrs_results}
\centering
\begin{tabular}{cccccc}
\toprule
T & Tree & RRS & RRS-wo & SpecHub \\
\midrule
    0.6 & $2^2$ & 1.8211 & 1.8687 & \textbf{1.8825} \\
    0.6 & $2^3$ & 2.4325 & 2.5585 & \textbf{2.5939} \\
    0.6 & $2^4$ & 2.9125 & 3.0899 & \textbf{3.1192} \\
    0.6 & $2^5$ & 3.2501 & 3.4838 & \textbf{3.5380} \\
    1.0 & $2^2$ & 1.8054 & 1.8327 & \textbf{1.8655} \\
    1.0 & $2^3$ & 2.3961 & 2.4737 & \textbf{2.4850} \\
    1.0 & $2^4$ & 2.8425 & 2.9019 & \textbf{3.0281} \\
    1.0 & $2^5$ & 3.1451 & 3.2548 & \textbf{3.3318} \\
\bottomrule
\end{tabular}

\label{tab:eagle_batch_efficiency}
\end{table}

\begin{table*}[h]
\caption{{\bf Batch Efficiency Results for Llama 2-13B-Chat} Average accepted tokens and batch efficiency for Llama 2-13B-Chat with different configurations at $T=0.6$ and $T=1.0$. The results compare RRS, RRS without replacement, and SpecHub on the CNN and OpenWebText datasets.}
\label{tab:batch-efficiency-llama13b}
\centering
\begin{tabular}{c|c|c|c|c|c}
\hline
T & Data & Tree & RRS & RRS w/o & SpecHub \\
\hline
0.6 & CNN & $2^2$ & 1.5876 & 1.6420 & \bfseries 1.6628 \\
0.6 & OpenWebText & $2^2$ & 1.6307 & 1.6724 & \bfseries 1.6958 \\
0.6 & CNN & $2^3$ & 1.9981 & 2.0507 & \bfseries 2.0888 \\
0.6 & OpenWebText & $2^3$ & 2.0151 & 2.1424 & \bfseries 2.1798 \\
0.6 & CNN & $2^4$ & 2.1921 & 2.3305 & \bfseries 2.3873 \\
0.6 & OpenWebText & $2^4$ & 2.3048 & 2.4331 & \bfseries 2.5251 \\
0.6 & CNN & $2^5$ & 2.3181 & 2.4716 & \bfseries 2.5689 \\
0.6 & OpenWebText & $2^5$ & 2.4882 & 2.6769 & \bfseries 2.7774 \\
\hline
1.0 & CNN & $2^2$ & 1.5993 & 1.6251 & \bfseries 1.6542 \\
1.0 & OpenWebText & $2^2$ & 1.6192 & 1.6415 & \bfseries 1.6687 \\
1.0 & CNN & $2^3$ & 1.9477 & 1.9839 & \bfseries 2.0878 \\
1.0 & OpenWebText & $2^3$ & 1.9994 & 2.0716 & \bfseries 2.1538 \\
1.0 & CNN & $2^4$ & 2.1784 & 2.2354 & \bfseries 2.3739 \\
1.0 & OpenWebText & $2^4$ & 2.2323 & 2.3355 & \bfseries 2.4581 \\
1.0 & CNN & $2^5$ & 2.2778 & 2.3684 & \bfseries 2.5540 \\
1.0 & OpenWebText & $2^5$ & 2.4226 & 2.5218 & \bfseries 2.7190 \\
\hline
\end{tabular}
\end{table*}

\begin{table*}[h]
\caption{{\bf Batch Efficiency Results for Vicuna-1.3-33B} Average accepted tokens and batch efficiency for Vicuna-33B with different configurations at $T=0.6$ and $T=1.0$. The results compare RRS, RRS without replacement, and SpecHub on the MT-Bench dataset.}
\label{tab:batch-efficiency-vicuna33b}
\centering
\begin{tabular}{c|c|c|c|c|c}
\hline
T & Data & Tree & RRS & RRS w/o & SpecHub \\
\hline
0.6 & MT-Bench & $2^2$ & 1.7878 & 1.8327 & \bfseries 1.8490 \\
0.6 & MT-Bench & $2^3$ & 2.3469 & 2.4694 & \bfseries 2.5098 \\
0.6 & MT-Bench & $2^4$ & 2.7273 & 2.9298 & \bfseries 2.9762 \\
0.6 & MT-Bench & $2^5$ & 3.0102 & 3.2114 & \bfseries 3.3245 \\
\hline
1.0 & MT-Bench & $2^2$ & 1.7792 & 1.8257 & \bfseries 1.8454 \\
1.0 & MT-Bench & $2^3$ & 2.3399 & 2.4169 & \bfseries 2.4650 \\
1.0 & MT-Bench & $2^4$ & 2.7252 & 2.8649 & \bfseries 2.9230 \\
1.0 & MT-Bench & $2^5$ & 3.0190 & 3.1116 & \bfseries 3.2079 \\
\hline
\end{tabular}
\end{table*}

\end{document}